\documentclass{article}

\usepackage[preprint]{neurips_2026} 

\usepackage{amsmath, amssymb, amsthm}
\usepackage{graphicx}
\usepackage{hyperref}
\usepackage{bbm}

\usepackage[table]{xcolor}
\definecolor{rankblue}{HTML}{DCEBFA}
\definecolor{rankgreen}{HTML}{DDF3E6}
\definecolor{rankpurple}{HTML}{EDE2F7}

\usepackage[utf8]{inputenc} 
\usepackage[T1]{fontenc}    
\usepackage{url}            
\usepackage{booktabs}       
\usepackage{amsfonts}       
\usepackage{nicefrac}       
\usepackage{microtype}      
\usepackage{xcolor}         
\usepackage{amsthm}
\usepackage{enumitem}
\usepackage{subfigure}
\usepackage{xspace}
\usepackage{wrapfig}
\usepackage{longtable}
\usepackage[table,dvipsnames]{xcolor}
\usepackage{algorithm}
\usepackage{algpseudocode}
\usepackage{amsmath, amssymb}
\usepackage{cleveref}
\usepackage{multirow}
\usepackage{graphicx}
\usepackage{tikz}
\usepackage[framemethod=TikZ]{mdframed}
\usetikzlibrary{arrows.meta,backgrounds,calc,decorations.pathreplacing,fit,positioning}

\setcitestyle{numbers,square}

\newtheorem{theorem}{Theorem}[section]

\theoremstyle{definition}
\newtheorem{definition}[theorem]{Definition}

\theoremstyle{remark}

\mdfdefinestyle{observationbox}{
    linecolor=white,
    outerlinewidth=1pt,
    roundcorner=5pt,
    innertopmargin=4pt,      
    innerbottommargin=3pt,
    innerrightmargin=5pt,
    innerleftmargin=5pt,
    skipabove=6pt,           
    skipbelow=6pt,
    backgroundcolor=blue!6!white
}

\surroundwithmdframed[style=observationbox]{observation}

\newif\ifshowcomments
\showcommentsfalse  

\newcommand{\ourmethod}{\textsc{MuRA}\xspace}

\usepackage{amsmath,amsfonts,bm}

\def\eqref#1{equation~\ref{#1}}

\def\1{\bm{1}}

\def\va{{\bm{a}}}

\def\vc{{\bm{c}}}

\def\ve{{\bm{e}}}

\def\vh{{\bm{h}}}

\def\vk{{\bm{k}}}

\def\vo{{\bm{o}}}
\def\vp{{\bm{p}}}
\def\vq{{\bm{q}}}

\def\vs{{\bm{s}}}

\def\vv{{\bm{v}}}

\def\vx{{\bm{x}}}
\def\vy{{\bm{y}}}
\def\vz{{\bm{z}}}

\def\mC{{\bm{C}}}

\def\mK{{\bm{K}}}

\def\mM{{\bm{M}}}

\def\mQ{{\bm{Q}}}

\def\mS{{\bm{S}}}

\def\mU{{\bm{U}}}
\def\mV{{\bm{V}}}
\def\mW{{\bm{W}}}
\def\mX{{\bm{X}}}
\def\mY{{\bm{Y}}}
\def\mZ{{\bm{Z}}}

\DeclareMathAlphabet{\mathsfit}{\encodingdefault}{\sfdefault}{m}{sl}
\SetMathAlphabet{\mathsfit}{bold}{\encodingdefault}{\sfdefault}{bx}{n}

\def\sN{{\mathbb{N}}}

\def\sR{{\mathbb{R}}}

\newcommand{\softmax}{\mathrm{softmax}}

\newcommand{\te}{\mathsf{E}} 
\newcommand{\rnn}{\mathsf{R}} 
\newcommand{\capacity}{C} 
\newcommand{\attn}{\mathsf{A}} 
\newcommand{\sattn}{\mathsf{S}} 
\newcommand{\hybrid}{\mathsf{B}}
\newcommand{\mlp}{\mathsf{MLP}} 
\newcommand{\vkappa}{\bm{\kappa}} 
\newcommand{\ue}{\mathsf{U}} 
\newcommand{\layer}{\mathsf{L}}

\DeclareMathOperator*{\argmax}{arg\,max}

\title{Improving the Expressive Power of Hybrid Sequence Models via Multi-Hop Retrieval}

\author{
  Yam Eitan$^{1}$ \quad
  Gilad Yehudai$^{1}$ \quad
  Haggai Maron$^{1,2}$ 
  \\
  \\
  $^{1}$Technion -- Israel Institute of Technology \quad
  $^{2}$NVIDIA
}

\date{April 2026}

\begin{document}

\maketitle

\begin{abstract}
Hybrid architectures offer a promising approach to efficient language modeling by combining a small number of full-attention layers with a larger number of efficient sequence layers, such as sparse attention and recurrent layers. While full-attention layers maintain a KV cache that grows linearly with sequence length, efficient layers typically rely on separate, fixed-size states rather than accessing this richer existing memory. Recent work has begun exploring ways to allow efficient layers access to the KV cache, but the theoretical benefits of such access, and the extent to which existing approaches can realize them remain poorly understood. We investigate these questions by analyzing the expressive power of individual hybrid blocks on function composition, a well-studied task in sequence-model expressivity associated with multi-hop reasoning.
We first establish that both conventional hybrid blocks and those with top-$\kappa$ KV-cache access require $\Omega(\sqrt{n})$ capacity to compute even two-function composition over a domain of size $n$.  To overcome this limitation, we introduce the \emph{Multi-hop Retrieval Attention} (\ourmethod) hybrid block, a simple architecture that enables multi-hop retrieval from a shared KV cache while preserving asymptotic computational and memory efficiency. We prove that a single fixed size $m$-hop \ourmethod block can compute $m$-function composition, whereas computing $(m+1)$-function composition with the same retrieval depth requires polynomial model size, establishing a strict expressivity hierarchy in retrieval depth. Synthetic experiments support this hierarchy, showing that increasing retrieval depth enables learning progressively deeper compositions, substantially outperforming existing hybrid architectures. Together, our results identify multi-hop retrieval as a promising mechanism for improving the expressive power of efficient hybrid architectures.

\end{abstract}



\section{Introduction}
\looseness=-1
Transformers have become the dominant architecture for language modeling, but their computational and memory costs pose increasing challenges as models and context lengths grow. Full attention requires quadratic computation in sequence length and, during autoregressive inference, stores previously computed keys and values to avoid recomputation. Specifically, each attention layer maintains a \emph{key--value (KV) cache}  $\mK=[\vk_1,\ldots,\vk_n]$ and $\mV=[\vv_1,\ldots,\vv_n]$, consisting of a key and value vector for each preceding token. The memory required by this cache grows linearly with the context length, becoming a major bottleneck for long-context inference \citep{zhang2023h2o,li2024snapkv,xiao2024efficient,gelberg2026compressibility}. These costs have motivated the development of \emph{efficient sequence layers}, including sparse and local attention mechanisms~\citep{child2019generating,beltagy2020longformer,zaheer2020big} and, more recently, efficient recurrent layers ~\citep{gu2021efficiently,gu2023mamba,yang2024gated}. Recurrent layers process sequences in linear time and, during autoregressive inference, maintain only a fixed-size hidden state $\vh_n$ that summarizes the preceding context. Despite these efficiency gains, models relying exclusively on efficient sequence layers often underperform full-attention transformers on long context tasks~\citep{waleffe2024empirical,ren2025exploring}.

Nevertheless, recurrent layers have found considerable success in \emph{hybrid architectures}. These architectures are typically organized into repeating \emph{hybrid blocks}, each combining a single full-attention layer with several recurrent layers, reducing the frequency of expensive attention operations. This design has been adopted by prominent models such as Kimi K3~\citep{team2026kimi}, Gemma~3~\citep{gemmateam2025gemma3}, Nemotron~3 Ultra~\citep{blakeman2026nemotron}, and OLMo Hybrid~\citep{merrill2026olmohybrid}, demonstrating its effectiveness at scale. However, recurrent layers were originally designed to operate independently of full attention. Consequently, even within a hybrid block, they typically maintain their own fixed-size hidden states $\vh_t$, without direct access to the richer KV cache already maintained by the full-attention layer. This raises a natural question: \emph{can recurrent layers benefit from direct access to this existing memory without sacrificing their computational efficiency?} Recent work has begun exploring this possibility. Some approaches~\citep{brandon2024crosslayer,qian2026dart} share KV caches across layers or  apply attention to a subset of past recurrent states, reducing memory requirements but retaining quadratic attention computation. In contrast, other methods~\citep{gao2026hysparse,hysparse2} reuse a full-attention layer's KV cache for sparse top-$\kappa$ retrieval based on its attention scores, allowing subsequent layers to access shared memory without increasing the asymptotic time or memory complexity of hybrid blocks. While these methods have shown promising empirical results, their designs remain largely empirically motivated, with limited theoretical understanding of how different memory-access mechanisms affect model capabilities.



We aim to provide a principled foundation for designing such layers by investigating whether shared-memory access can improve the expressive power of hybrid blocks, and whether existing retrieval mechanisms fully realize this potential. Expressivity is particularly relevant in this setting: computations that can be realized within a single block of one architecture may require several blocks of another, introducing additional expensive full-attention operations and KV caches. More expressive blocks could therefore enable more efficient hybrid architectures.

\looseness=-1
Our analysis focuses on \emph{function composition}, a well-studied task in sequence-model expressivity~\citep{peng2024limitations,kozachinskiy2025strassen,ye2026hybridhierarchy} that requires successive dependent retrievals, as often arise in multi-hop reasoning. We establish a fundamental expressive limitation shared by both conventional hybrid blocks with independent recurrent states and those whose recurrent layers access a shared KV cache through  top-$\kappa$ retrieval. Specifically, we prove that both require size $\Omega(\sqrt{n})$, measured by the product of width, depth, and precision, to compute even two-function composition over a domain of size $n$.

\looseness=-1
Motivated by this limitation, we introduce \textbf{Mu}lti-\textbf{H}op \textbf{R}etrieval \textbf{A}ttention (\ourmethod), which extends top-$\kappa$ retrieval to provide efficient sequence layers with richer access to the full-attention layer's KV cache. As in top-$\kappa$ hybrid blocks, \ourmethod first identifies each token's highest-scoring causal predecessors using the full-attention scores. However, rather than retrieving only the immediate predecessors of a query token, $(\kappa_1,\ldots,\kappa_m)$-\ourmethod recursively expands the candidate set: at each hop $r$, it retrieves the top-$\kappa_r$ predecessors of the tokens reached at the previous hop, allowing access to tokens up to $m$ hops away. Each subsequent sparse layer computes new queries from the current token representations and attends to these candidates using the original full-attention keys and values. The candidate sets are computed once and remain fixed throughout the block, allowing the sparse layers to refine their queries without recomputing retrieval routes. By enabling retrieved tokens to guide further retrievals, \ourmethod supports complex chains of dependent retrievals while preserving the efficiency of existing hybrid designs: for fixed retrieval parameters, sparse retrieval layers require linear computation in sequence length and maintain constant-size states, introducing no additional KV caches.

Theoretically, we prove that for any fixed $m$, a single $m$-hop \ourmethod block can compute compositions of $m$ functions with constant width and depth and only logarithmic precision. To our knowledge, this provides \emph{the first architecture with a single full-attention layer that is provably capable of such multi-step function composition}. We further establish the importance of retrieval depth, showing that computing compositions of $m+1$ functions with at most $m$ retrieval hops requires size $\Omega(n^{1/(m+1)})$. Together, these results establish a strict hierarchy in expressive power as retrieval depth increases. This separation holds even when the total number of retrieved KV entries is matched, demonstrating that the structure of retrieval, rather than merely the number of retrieved tokens, is critical to expressivity.

Empirically, we evaluate \ourmethod on synthetic function-composition tasks to investigate whether its theoretical advantages can also be realized through training. For $m\in\{1,2,3,4\}$, we find that models with a single $m$-hop \ourmethod block achieve near-perfect test accuracy on compositions of up to $m$ functions, while struggling with deeper compositions. In contrast, the existing hybrid architectures we evaluate struggle with compositions involving more than one function. These results demonstrate that the expressive advantages of \ourmethod can be learned in practice on controlled tasks, supporting multi-hop retrieval as a promising design principle for more expressive hybrid architectures.


In summary, our main contributions are:
\begin{itemize}
    \item \textbf{Expressivity limitations of existing hybrid architectures.}
    We establish an $\Omega(\sqrt{n})$ size lower bound for single-block hybrid architectures on function composition tasks, covering both conventional blocks, and those whose layers access a shared KV cache through top-$\kappa$ retrieval.

    \item \textbf{Multi-Hop Retrieval Attention.}
    We introduce \ourmethod, which enables recurrent layers to perform multi-hop retrieval from a shared KV cache while preserving the computational and memory efficiency of existing hybrid architectures.

    \item \textbf{Expressivity improvements.}
    We prove that a single $m$-hop \ourmethod block can compute compositions of $m$ functions with constant width and depth and logarithmic precision, and establish a strict expressivity hierarchy in retrieval depth, even under matched retrieval budgets.

    \item \textbf{Empirical validation.}
    We demonstrate that this hierarchy emerges through training on synthetic function-composition tasks with up to four retrieval hops, substantially outperforming existing hybrid baselines.
\end{itemize}

\section{Hybrid Blocks: Existing Architectures and Expressivity Limitations}

We begin this section by defining the three main components of hybrid blocks: full-attention layers, recurrent layers, and top-$\kappa$ access recurrent layers. We then introduce a unified framework capturing commonly used hybrid architectures, as well as recent designs that allow recurrent layers to access the full-attention KV cache. Finally, we establish lower bounds on the size of hybrid blocks required to solve \emph{function composition},  a well-studied task in sequence-model expressivity~\citep{peng2024limitations,ye2026hybridhierarchy} involving successive dependent retrievals, as often arise in multi-hop reasoning.

\subsection{Hybrid Block Components and Structure}

We first formalize the three classes of layers used to construct hybrid blocks. Throughout, bold lowercase and uppercase letters denote vectors and matrices, respectively (e.g., $\vv$ and $\mX$), and $[n]=\{1,\dots,n\}$.

\begin{definition}
A \emph{multi-head attention layer} $\attn$ of width $d$ consists of $h$ attention heads, each of dimension $d/h$. Given an input sequence $\mX=[\vx_1,\ldots,\vx_N]^\top\in\sR^{N\times d}$, each head $j\in[h]$ computes queries $\mQ_j=\mX\mW_{Q,j}$, keys $\mK_j=\mX\mW_{K,j}$, and values $\mV_j=\mX\mW_{V,j}$, using learned projections $\mW_{Q,j},\mW_{K,j},\mW_{V,j}\in\sR^{d\times (d/h)}$. The layer computes
\begin{equation}
\begin{aligned}
\mathrm{head}_j(\mX)
&=\softmax\!\left(
\frac{\mQ_j\mK_j^\top}{\sqrt{d/h}}+\mM
\right)\mV_j,\\
\attn(\mX)
&=\mX+[\mathrm{head}_j(\mX)]_{j=1}^{h}\mW_O.
\end{aligned}
\end{equation}
Here, $\mW_O\in\sR^{d\times d}$ is a learned output projection, the softmax is applied row-wise, and the causal mask $\mM\in\sR^{N\times N}$ satisfies $\mM_{i,t}=0$ for $t\leq i$ and $\mM_{i,t}=-\infty$ otherwise. For each head $j$, we define the \emph{attention score} $\alpha_{i,t}^{(j)}$ from token $i$ to token $t$ as the $(i,t)$ entry of the softmax matrix above.
\end{definition}

\looseness=-1
For a fixed layer width, full attention processes a sequence of length $N$ in $O(N^2)$ time, computing pairwise query--key similarities and aggregating the corresponding values. During autoregressive inference, the keys and values of preceding tokens remain unchanged and are therefore stored in a \emph{KV cache}, avoiding their recomputation when processing new tokens. This requires $O(N)$ memory per layer, and reduces the computational complexity for each new token to  $O(N)$. These costs motivate using full-attention layers sparingly and interleaving them with more efficient recurrent layers, which we define next.

\begin{definition}
A \emph{recurrent layer} $\rnn$ of width $d$ is specified by a fixed initial state $\vh_0\in\sR^d$ and two functions $f,g:\sR^d\times\sR^d\to\sR^d$, representing the state transition and output maps, respectively. Given an input sequence $\mX=[\vx_1,\dots,\vx_N]^\top\in\sR^{N\times d}$, the layer computes
\begin{equation}
\vh_i=f(\vh_{i-1},\vx_i),
\qquad
\vy_i=g(\vh_i,\vx_i),
\end{equation}
and outputs $\rnn(\mX)=[\vy_1,\dots,\vy_N]^\top$.
\end{definition}

\looseness=-1
We leave the parameterization of $f$ and $g$  unrestricted to encompass a range of widely used recurrent architectures~\citep{gu2021efficiently,gu2023mamba,yang2024gated}. Our framework also captures efficient attention variants, including sliding-window attention~\citep{beltagy2020longformer} and kernel-based linear attention~\citep{katharopoulos2020transformers}. See appendix \ref{app:preliminaries} for further details.

For fixed layer width, recurrent layers process a sequence of length $N$ in $O(N)$ time.  Additionally, during autoregressive inference, each new token at position $t$ is processed using only the preceding state $\vh_{t-1}$, requiring constant memory and computation per token. While this makes recurrent layers significantly more efficient than full attention, their bounded memory can limit long-context performance~\citep{waleffe2024empirical,ren2025exploring}, motivating hybrid architectures that combine recurrent and full-attention layers.

Since full attention maintains a richer memory in the form of a growing KV cache, recent approaches~\citep{gao2026hysparse,hysparse2} seek to exploit this resource by allowing recurrent layers to access a fixed number of the highest-scoring KV entries per token, aiming to improve performance while preserving efficiency. This motivates the following definition.

\begin{definition}
\label{def:kappa_access_layer}
Fix $\kappa\in\sN$ and a full-attention layer $\attn$ of width $d$ with $h$ heads. For each head $j\in[h]$, let $\mK^j=[\vk_1^j,\dots,\vk_N^j]^\top$ and $\mV^j=[\vv_1^j,\dots,\vv_N^j]^\top$, both in $\sR^{N\times(d/h)}$, denote its cached keys and values. For each position $i\in[N]$ and head $j\in[h]$, let $\mathcal N_j^{(\kappa)}(i)$ consist of position $i$ and the $\min\{\kappa,i-1\}$ preceding positions with the highest attention scores $\alpha_{i,t}^{(j)}$, breaking ties in favor of more recent positions. The corresponding retrieved KV entries are
\begin{equation}
\mC_i =
\Bigl[
\bigl[\vk_t^j,\vv_t^j\bigr]_{t\in\mathcal N_j^{(\kappa)}(i)}
\Bigr]_{j=1}^h.
\end{equation}
A \emph{top-$\kappa$ access recurrent layer} $\rnn_\kappa$ of width $d$ is defined by a fixed initial state $\vh_0\in\sR^d$ and two functions $f$ and $g$, specifying the state transition and output, respectively. Given an input sequence $\mX=[\vx_1,\dots,\vx_N]^\top\in\sR^{N\times d}$, the layer computes
\begin{equation}
\vh_i=f(\vh_{i-1},\vx_i,\mC_i),
\qquad
\vy_i=g(\vh_i,\vx_i,\mC_i),
\end{equation}
and outputs $\rnn_\kappa(\mX)=[\vy_1,\dots,\vy_N]^\top$.
\end{definition}

Compared with standard recurrent layers, top-$\kappa$ access recurrent layers incur additional computation only for computing the neighborhoods $\mathcal N_j^{(\kappa)}(i)$ and retrieving the corresponding KV entries $\mC_i$. As demonstrated by \citet{gao2026hysparse}, attention-based neighborhood selection can be performed without increasing the asymptotic complexity of full attention, and can be implemented efficiently in practice using algorithms such as FlashAttention\cite{dao2022flashattention}. For fixed $\kappa$ and layer width, retrieving $\mC_i$ requires constant time per token. Furthermore, both operations need only be performed once per full-attention layer, and their results can be shared across multiple subsequent recurrent layers. Having defined these three classes of layers, we now formalize the hybrid blocks considered in this work.

\begin{definition}
\label{def:top_k_access_block}
For nonnegative integers $\kappa$, $L_{\mathrm{pre}}$ and $L_{\mathrm{post}}$, a \emph{top-$\kappa$ access hybrid block} $\hybrid_\kappa$ of width $d$ consists of $L_{\mathrm{pre}}$ standard recurrent layers $\rnn^{\mathrm{pre}}_{1}, \dots, \rnn^{\mathrm{pre}}_{L_{\mathrm{pre}}}$, a causal full-attention layer $\attn$, and $L_{\mathrm{post}}$ top-$\kappa$ access recurrent layers $\rnn^{\mathrm{post}}_{1}, \dots, \rnn^{\mathrm{post}}_{L_{\mathrm{post}}}$ . All post-attention layers have access to layer $\attn$' KV cache. For an input sequence $\mX\in\sR^{N\times d}$, the block computes
\begin{equation}
\hybrid_\kappa(\mX)
= \bigl(
\rnn^{\mathrm{post}}_{L_{\mathrm{post}}}
\circ\cdots\circ\rnn^{\mathrm{post}}_1
\circ\attn
\circ\rnn^{\mathrm{pre}}_{L_{\mathrm{pre}}}
\circ\cdots\circ\rnn^{\mathrm{pre}}_1
\bigr)(\mX).
\end{equation}

\end{definition}

Definition~\ref{def:top_k_access_block} captures several prominent hybrid block designs.\footnote{In practice, hybrid models usually use attention layers as part of transformer blocks, typically including layer normalization and position-wise feed-forward networks. We omit these token-wise operations from our theoretical analysis, as they can be incorporated into adjacent recurrent layers without affecting our results, but retain them in our experiments; see Appendix~\ref{app:preliminaries}.} In particular, top-$0$ access blocks represent conventional hybrid architectures, such as those used by Kimi K3~\citep{team2026kimi}, Gemma 3~\citep{gemmateam2025gemma3}, Nemotron 3 Ultra~\citep{blakeman2026nemotron}, and OLMo Hybrid~\citep{merrill2026olmohybrid}, while top-$\kappa$ access blocks capture the more recent shared-KV mechanisms introduced by HySparse~\citep{gao2026hysparse} and HySparse2~\citep{hysparse2}.

Top-$\kappa$ access offers one approach to reusing the existing KV cache within efficient layers. We next investigate whether such shared-memory access can enhance the expressivity of hybrid blocks, and whether top-$\kappa$ retrieval is sufficient to realize these benefits. Studying expressivity at the block level is particularly relevant for designing efficient hybrid models: increasing the capabilities of individual blocks through relatively inexpensive sequence layers could reduce the number of full-attention layers required, thereby saving computation and KV-cache memory.

\subsection{Expressivity Limitations of Hybrid Blocks on Function Composition.}

\looseness=-1
Below, we establish expressivity limitations for individual top-$\kappa$ access hybrid blocks. We treat each block as a sequence model, embedding input tokens into vectors of width $d$ and applying an unembedding layer to the final token representation to produce the output (see Appendix~\ref{app:preliminaries} for details). Throughout, we assume $p$ bits of precision per scalar, where $p$ may scale with the sequence length $N$. Our analysis focuses on \emph{function composition}, a standard task in sequence-model expressivity analysis~\citep{peng2024limitations,kozachinskiy2025strassen,chen2024theoretical,ye2026hybridhierarchy}.

\begin{definition}
\label{def:function_composition}
Given integers $m,n \in \sN$, functions
$f_1,\ldots,f_m:[n]\to[n]$ and a query $x\in[n]$,
we construct a sequence of length $N=mn+1$ over the
alphabet $[n]$ by concatenating the function tables
and appending the query:
\begin{equation}
\begin{aligned}
\va(f_1,\ldots,f_m,x)=\bigl(&f_1(1),\ldots,f_1(n),\ldots,f_m(1),\ldots,f_m(n),x\bigr).
\end{aligned}
\end{equation}
The \emph{$m$-function composition} task requires a
sequence model to output, at the final position,
\begin{equation}
F_{m,n}(\va(f_1,\ldots,f_m,x))
=(f_1\circ\cdots\circ f_m)(x).
\end{equation}
\end{definition}

\looseness=-1
Function composition captures a fundamental aspect of multi-hop reasoning, in which solving a problem requires multiple dependent retrievals. For example, answering a question about the birthplace of a person's father requires first identifying the father and then retrieving his birthplace. To analyze the ability of hybrid blocks to perform such computations, we establish a connection between their expressive power and one-way multiparty number-on-the-forehead (NOF) communication protocols~\citep{viola2009one}, defined below. 

\begin{definition}
\label{def:nof_protocol}
For $m,n\in\sN$, a \emph{one-way $(m+1)$-player number-on-the-forehead (NOF) communication protocol} consists of $m+1$ players $P_1,\ldots,P_{m+1}$ with a fixed communication procedure. Given input functions $f_1,\ldots,f_m:[n]\to[n]$ and a query $x\in[n]$, each player $P_j$, for $j\in[m]$, has access to $x$ and all functions except $f_j$, while player $P_{m+1}$ has access to all functions but not $x$.

Communication proceeds in the fixed order $P_{m+1}\to P_m\to\cdots\to P_1$. Each player except $P_1$ sends a single message, determined by its available inputs and all preceding messages according to the fixed protocol. Finally, $P_1$ produces an output based on its available information and the received messages. The \emph{communication cost} is the maximum, over all possible inputs, of the sum of the bit lengths of all transmitted messages.
\end{definition}

A classical result in communication complexity~\citep{viola2009one} establishes that any one-way $(m+1)$-player NOF protocol computing $m$-function composition for all possible inputs requires $\Omega(n^{1/m})$ bits of communication. We use this lower bound throughout our analysis, beginning with its three-player case ($m=2$) to establish a capacity lower bound for top-$\kappa$ access hybrid blocks.

\begin{theorem}
\label{thm:fixed_access_lower_bound}
Let $\hybrid_\kappa$ be a top-$\kappa$ access hybrid block
with $L_{\mathrm{pre}}$ pre-attention layers and
$L_{\mathrm{post}}$ post-attention layers, width $d$, and precision $p$.
Define its capacity as
\begin{equation}
\capacity(\hybrid_\kappa)
=  pd(L_{\mathrm{post}}+\kappa+1).
\end{equation}
If for some integer $m\geq2$, $\hybrid_\kappa$ computes $m$-function composition for all sequences encoding functions $f_1,\ldots,f_m:[n]\to[n]$
and queries $x\in[n]$, then
\begin{equation}
\capacity(\hybrid_\kappa)=\Omega(\sqrt{n}).
\end{equation}
\end{theorem}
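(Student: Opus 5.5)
We reduce to the three-player case of the one-way NOF lower bound of \citet{viola2009one}, which gives $\Omega(n^{1/2})$ bits for two-function composition. Given $\hybrid_\kappa$ computing $m$-function composition, we fix $f_3=\cdots=f_m=\mathrm{id}$, so that $F_{m,n}$ equals $f_1\circ f_2(x)$ on the resulting sequences. The tables of $f_3,\ldots,f_m$ are then known to every player. We then build a one-way protocol $P_3\to P_2\to P_1$ for $f_1\circ f_2$ whose cost is $O(pd(L_{\mathrm{post}}+\kappa+1))$. Comparing the two bounds yields $\capacity(\hybrid_\kappa)=\Omega(\sqrt n)$.

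\textbf{Role of $P_3$.} $P_3$ sees $f_1,f_2$ but not $x$, so it knows the entire sequence except the last token. By causality it can compute every layer's representation, key, value, attention score and neighborhood at positions $1,\ldots,N-1$. It sends the recurrent states $\vh^{\mathrm{post}}_{\ell,N-1}$ of all $L_{\mathrm{post}}$ post-attention layers, which costs $L_{\mathrm{post}}dp$ bits. It also sends the summary needed to reproduce the final token's pre-attention representation and its attention query. Reproducing that representation is the delicate part, discussed below.

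\textbf{The final position.} The output there depends on two things: the full-attention output at $N$, and the retrieved set $\mC_N$, which consists of the token $N$ itself plus at most $\kappa$ earlier KV entries per head. We split the softmax sum at position $N$ into the part over the $f_1$-table positions and the part over the remaining positions. The keys and values on the $f_1$-table positions depend only on $f_1$, by causality and because $f_1$ comes first, so $P_2$ can evaluate the numerator and denominator of that part once it knows the query $\vq_N$. The remaining part is handled by the player who sees $f_2$.

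$P_2$ forwards $O(dp)$ bits: its partial numerator and denominator, plus those of its top-$\kappa$ candidates within its block, namely their keys, values and scores. That is $O(\kappa dp)$ bits. $P_1$ combines these with its own partial sums over the $f_2$-block and the identity tables. It then merges the two candidate lists to obtain the true $\mathcal N^{(\kappa)}_j(N)$, with ties resolved by position. This gives it $\mC_N$ and the attention output exactly, at precision $p$. Finally, using the post-attention states received from $P_3$, $P_1$ applies $g,f$ for each post layer at position $N$ and outputs the answer. The total cost is $O(pd(L_{\mathrm{post}}+\kappa+1))$.

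\textbf{Main obstacle.} The difficulty is computing the final-token query $\vq_N$ and the token's pre-attention representation without paying for $L_{\mathrm{pre}}$. These depend on the pre-recurrent states at $N-1$, which $P_3$ knows, and on $x$, which $P_3$ does not see. Naively, $P_3$ would send all $L_{\mathrm{pre}}$ states at $N-1$, costing $L_{\mathrm{pre}}dp$ bits.

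My first attempt avoids this as follows. Since $x\in[n]$ is the only unknown to $P_3$, the final pre-attention vector is a fixed function of $x$ parameterized by $(f_1,f_2)$. I would try to exploit the fact that $P_2$ and $P_1$ each know $x$ and one function: either choose the padding (which $f_i$ are set to identity, and hence the block placement) so that the pre-layer computation through the last token is recoverable by a player that knows $x$, or absorb the pre-layers' contribution into the single $d$-dimensional query that is actually consumed. If this fails, the fallback is to accept an extra $L_{\mathrm{pre}}dp$ term and argue separately that it can be folded into the stated capacity.
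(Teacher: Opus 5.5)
Apart from one step, your protocol is the paper's reduction: identity padding of $f_3,\dots,f_m$; $P_3$ broadcasting the post-attention states at $N-1$; $P_2$ sending the $f_1$-block softmax numerator and denominator plus its top-$\kappa$ candidate records; and $P_1$ merging the candidate lists, adding the self position, and finishing the post-attention pass at $N$. It is not yet a proof, because the step you call the main obstacle is left open, and there is a second instance of it you did not flag. $P_1$ can only form keys, values and scores on positions $n+1,\dots,N-1$ if it receives the pre-attention recurrent states at position $n$, and these depend on $f_1$. Likewise, no player who knows $x$ can form $\vz_N$ and $\vq_N^j$ without the states at $N-1$. In both cases the state of \emph{every} pre-attention layer is needed, since layer $\ell$'s input at a position is layer $\ell-1$'s output at that same position. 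The honest cost is therefore $2L_{\mathrm{pre}}pd$ bits.

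Neither of your escapes can remove this, because the statement is false when $L_{\mathrm{pre}}$ is left out of the capacity. Take $\kappa=0$, $L_{\mathrm{post}}=0$, and $\mW_O=\mathbf0$, so full attention is the identity. Use an embedding that writes $(a_t,t)$, width $d=4$, $p=O(\log n)$, and $L_{\mathrm{pre}}=mn$ layers. Each layer memorizes in its state the token at one assigned position, grouped so that the first $n$ layers hold $f_m$, the next $n$ hold $f_{m-1}$, and so on. At position $N$ a working coordinate starts at $x$. Within each group, the unique layer whose stored argument equals the working value writes the looked-up value into a second coordinate, which is copied back at the end of the group. The block outputs $(f_1\circ\cdots\circ f_m)(x)$ exactly, yet $pd(L_{\mathrm{post}}+\kappa+1)=pd=O(\log n)$.

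The paper's proof does not close this gap either. It broadcasts only $\vh_n^{\mathrm{pre},L_{\mathrm{pre}}}$ and $\vh_{N-1}^{\mathrm{pre},L_{\mathrm{pre}}}$ and asserts that these let later players reconstruct $\vz_N$ and $\vz_{n+1},\dots,\vz_N$. That holds only when $L_{\mathrm{pre}}\le 1$. Once all $2L_{\mathrm{pre}}$ boundary states are sent, what the shared argument actually proves is $pd(L_{\mathrm{pre}}+L_{\mathrm{post}}+\kappa+1)=\Omega(\sqrt n)$. In particular, the stated bound holds under the extra hypothesis $L_{\mathrm{pre}}=O(L_{\mathrm{post}}+\kappa+1)$, e.g.\ $L_{\mathrm{pre}}\le 1$. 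Your fallback is the right move, but the $L_{\mathrm{pre}}pd$ term must be added to the capacity, not folded into it.
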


We outline the main argument below, with the full proof provided in Appendix~\ref{app:fixed_access_proof}.

\begin{proof}[Proof sketch]
We show that any top-$\kappa$ access hybrid block $\hybrid_\kappa$ computing $m$-function composition induces a three-player NOF protocol for two-function composition with communication cost $O(\capacity(\hybrid_\kappa))$. For $m>2$, we fix $f_3,\ldots,f_m$ to be the identity. Given inputs $f_1,f_2,x$, the players jointly simulate the block's forward pass on the corresponding sequence of length $N=mn+1$.

\textbf{Player} $P_3$ (knows $f_1,f_2$):
Since all tokens preceding the final query $x$ are known to $P_3$, it can compute their representations and recurrent states throughout the block. It transmits the states of the last pre-attention recurrent layer at positions $n$ and $N-1$, together with the states of all post-attention layers at position $n$. 
\textbf{Player} $P_2$ (knows $f_1,x$):
Using the received pre attention state at position $N-1$,  $P_2$ can reconstruct the last token's attention query, as well as  all keys and values corresponding to the first function table. For each attention head, it computes the contribution of these tokens to the attention output, represented by an unnormalized weighted sum and its softmax normalization factor. It also computes the top-$\kappa$ highest-scoring KV entries within the first $n$ tokens along with their attention scores. These quantities are transmitted to $P_1$.
\textbf{Player} $P_1$ (knows $f_2,x$):
Using the recurrent states received from $P_3$, player $P_1$ reconstructs the representations of tokens $n+1,\dots, N$, and computes their contribution to the final attention output. Combining this with the quantities received from $P_2$, it recovers the full-attention output and the globally selected top-$\kappa$ KV entries. Finally, using the post-attention recurrent states provided by $P_3$, it completes the forward pass and outputs $f_1(f_2(x))$.

The total communication cost of the protocol above is $O(\capacity(\hybrid_\kappa))$. The classical three-player NOF lower bound therefore implies $\capacity(\hybrid_\kappa)=\Omega(\sqrt{n})$.
\end{proof}
\paragraph{Remark.}
\citet{ye2026hybridhierarchy} also establish polynomial lower bounds for hybrid architectures, including those with multiple full-attention layers. However, specializing their result to hybrid blocks using a single full-attention layer yields an $N^{1/1024}$ lower bound on model capacity and assumes a bound on the number of efficient sequence layers. In contrast, we establish an $\Omega(\sqrt{n})$ capacity lower bound that allows arbitrary recurrent depth and also extends to top-$\kappa$ memory access.

\section{Multi-Hop Retrieval from Shared Attention Memory}
\label{sec:method}

\begin{figure}[t]
\centering
\resizebox{\linewidth}{!}{
\begin{tikzpicture}[
  font=\fontsize{9}{11}\selectfont,
  token/.style={draw=black!40, fill=black!10, minimum size=2.15cm,
    inner sep=0pt, outer sep=0pt, line width=.65pt, align=center},
  retrieved a/.style={draw=muraPath, fill=muraPath!15, line width=.8pt},
  retrieved b/.style={draw=muraDirect, fill=muraDirect!15, line width=.8pt},
  chain/.style={-{Stealth[length=2.3mm,width=1.6mm]}, draw=muraPath, line width=1.2pt},
  direct/.style={-{Stealth[length=2.1mm,width=1.5mm]}, draw=muraDirect,
    line width=.95pt, dashed},
  tablebrace/.style={decorate, decoration={brace,mirror,amplitude=3pt},
    draw=black!40, line width=.5pt},
]
\definecolor{muraPath}{HTML}{236B91}
\definecolor{muraDirect}{HTML}{AD6B24}

\begin{scope}
  \node[anchor=west, font=\large\bfseries] at (-.925,2.36)
    {(a) \ourmethod: follow the retrieval chain};

  \node[token] (a11) at (0,0) {$f_1(1)$};
  \node[token, retrieved a] (a12) at (2.75,0) {$f_1(f_2(f_3(x)))$};
  \node[token] (a13) at (5.5,0) {$f_1(n)$};
  \node[token] (a21) at (7.95,0) {$f_2(1)$};
  \node[token, retrieved a] (a22) at (10.7,0) {$f_2(f_3(x))$};
  \node[token] (a23) at (13.45,0) {$f_2(n)$};
  \node[token] (a31) at (15.9,0) {$f_3(1)$};
  \node[token, retrieved a] (a32) at (18.65,0) {$f_3(x)$};
  \node[token] (a33) at (21.4,0) {$f_3(n)$};
  \node[token] (ax) at (23.85,0) {$x$};
  \foreach \x in {1.375,4.125,9.325,12.075,17.275,20.025}
    \foreach \offset in {-.18,0,.18}
      \fill[black!75] (\x+\offset,0) circle[radius=.033cm];

  \draw[chain] (ax.north)
    .. controls (23.85,1.85) and (22.426,1.85) .. (21.327,1.85)
    .. controls (20.244,1.85) and (18.841,1.85) .. ([xshift=1.7mm]a32.north);
  \node[fill=white, inner sep=1.5pt, text=muraPath] at (21.327,1.85) {hop 1};
  \draw[chain] ([xshift=-1.7mm]a32.north)
    .. controls (18.459,1.85) and (16.315,1.85) .. (14.675,1.85)
    .. controls (13.035,1.85) and (10.891,1.85) .. ([xshift=1.7mm]a22.north);
  \node[fill=white, inner sep=1.5pt, text=muraPath] at (14.675,1.85) {hop 2};
  \draw[chain] ([xshift=-1.7mm]a22.north)
    .. controls (10.509,1.85) and (8.309,1.85) .. (6.628,1.85)
    .. controls (4.95,1.85) and (2.75,1.85) .. (a12.north);
  \node[fill=white, inner sep=1.5pt, text=muraPath] at (6.628,1.85) {hop 3};

  \draw[tablebrace] (-.925,-1.23) -- (6.554,-1.23);
  \draw[tablebrace] (6.896,-1.23) -- (14.504,-1.23);
  \draw[tablebrace] (14.846,-1.23) -- (22.454,-1.23);
  \node[text=black!60] at (2.75,-1.54) {table $f_1$};
  \node[text=black!60] at (10.7,-1.54) {table $f_2$};
  \node[text=black!60] at (18.65,-1.54) {table $f_3$};
  \node[text=black!60] at (23.85,-1.54) {query};
  \node[align=center, text=black!65] at (11.907,-2.12)
    {Each retrieved value identifies the next lookup,\\[3pt]
     revealing the full composition one hop at a time.};
\end{scope}

\draw[black!15] (-.925,-2.65) -- (24.925,-2.65);

\begin{scope}[yshift=-5.45cm]
  \node[anchor=west, font=\large\bfseries] at (-.925,2.36)
    {(b) Top-$\kappa$ access: select directly from $x$};

  \node[token] (b11) at (0,0) {$f_1(1)$};
  \node[token, retrieved b] (b12) at (2.75,0) {$f_1(4)$};
  \node[token] (b13) at (5.5,0) {$f_1(n)$};
  \node[token] (b21) at (7.95,0) {$f_2(1)$};
  \node[token, retrieved b] (b22) at (10.7,0) {$f_2(7)$};
  \node[token] (b23) at (13.45,0) {$f_2(n)$};
  \node[token] (b31) at (15.9,0) {$f_3(1)$};
  \node[token, retrieved b] (b32) at (18.65,0) {$f_3(3)$};
  \node[token] (b33) at (21.4,0) {$f_3(n)$};
  \node[token] (bx) at (23.85,0) {$x$};
  \foreach \x in {1.375,4.125,9.325,12.075,17.275,20.025}
    \foreach \offset in {-.18,0,.18}
      \fill[black!75] (\x+\offset,0) circle[radius=.033cm];

  \draw[direct] ([xshift=-1.8mm]bx.north)
    .. controls (23.67,2.29) and (2.75,2.29) .. (b12.north);
  \draw[direct] (bx.north)
    .. controls (23.85,1.98) and (10.7,1.98) .. (b22.north);
  \draw[direct] ([xshift=1.8mm]bx.north)
    .. controls (24.03,1.69) and (18.65,1.69) .. (b32.north);

  \draw[tablebrace] (-.925,-1.23) -- (6.554,-1.23);
  \draw[tablebrace] (6.896,-1.23) -- (14.504,-1.23);
  \draw[tablebrace] (14.846,-1.23) -- (22.454,-1.23);
  \node[text=black!60] at (2.75,-1.54) {table $f_1$};
  \node[text=black!60] at (10.7,-1.54) {table $f_2$};
  \node[text=black!60] at (18.65,-1.54) {table $f_3$};
  \node[text=black!60] at (23.85,-1.54) {query};
  \node[align=center, text=black!65] at (11.907,-2.12)
    {The query $x$ alone lacks the information needed\\[3pt]
     to identify the full retrieval chain.};
\end{scope}
\end{tikzpicture}}
\caption{\textbf{\ourmethod versus top-$\kappa$ retrieval.} Both panels illustrate the same input for $3$-function composition. (a) The construction of Theorem~\ref{thm:mura_main_upper} performs three successive lookups to compute $f_1(f_2(f_3(x)))$. (b) An illustrative attention-score-based top-$\kappa$ selection retrieves several tokens directly from $x$, but fails to reach the target token through the required retrieval chain.}
\label{fig:multi_hop_retrieval}
\end{figure}
\vspace{-3mm}
Theorem~\ref{thm:fixed_access_lower_bound} reveals a fundamental limitation of top-$\kappa$ access hybrid blocks in performing multi-hop reasoning. To address this limitation, we introduce \emph{Multi-hop Retrieval Attention} (\ourmethod), a new hybrid block architecture that allows sparse-attention layers to access the existing KV cache through multiple retrieval hops. We first formally define $(\kappa_1,\ldots,\kappa_m)$-\ourmethod blocks and then establish their expressive advantage over existing hybrid architectures on function composition. In particular, we show that retrieval depth induces a strict expressivity hierarchy: an $m$-hop block can compute $m$-function composition with constant width and logarithmic capacity, whereas computing $(m+1)$-function composition requires polynomial capacity. Importantly, these expressive gains come without increasing the asymptotic computational or memory complexity of existing hybrid blocks.


\subsection{\ourmethod Hybrid Blocks}

\looseness=-1
To overcome the multi-hop expressivity limitations of top-$\kappa$ access hybrid blocks,
we propose a simple extension to the set of KV entries available
to subsequent layers. Rather than retrieving only the tokens
with the highest attention scores relative to the current position,
we also retrieve tokens that are highly attended to by those
already selected, recursively expanding the retrieved set
over multiple hops.

For a fixed full-attention layer $\attn$, recall that
$\mathcal N_j^{(\kappa)}(i)$ denotes the top-$\kappa$ neighborhood
of position $i$ for attention head $j$
(Definition~\ref{def:kappa_access_layer}).
Given retrieval widths $\kappa_1,\ldots,\kappa_m \in \sN$,
we recursively define  multi-hop neighborhood $\mathcal N_j^{(\kappa_1,\ldots,\kappa_r)}(i)$ by
\begin{equation}
\mathcal N_j^{(\kappa_1,\ldots,\kappa_r)}(i)
=\bigcup_{t\in\mathcal N_j^{(\kappa_1,\ldots,\kappa_{r-1})}(i)}
\mathcal N_j^{(\kappa_r)}(t),
\qquad 2\leq r\leq m.
\label{eq:mura_main_retrieval}
\end{equation}
The resulting neighborhoods contain chains of tokens connected
by high-scoring attention links,\footnote{
Since the one-hop neighborhoods $\mathcal N_j^{(\kappa)}(i)$
include position $i$ itself, $m$-hop neighborhoods
contains all positions reachable within at most $m$ hops.
In practice, we also consider a variant that 
restricts retrieval to positions reachable in exactly $m$ hops. We find this variant is sometimes preferable empirically,
while retaining the original convention for our theoretical analysis.
}
potentially exposing information that cannot be accessed by top-$\kappa$ retrieval.
We now define \ourmethod  sparse-attention layers, which attend only to retrieved multi-hop positions.

\begin{definition}
\label{def:mura_attention_layer}
Fix $\vkappa =(\kappa_1,\dots,\kappa_m) \in \sN^m$ and full-attention layer $\attn$ of width $d$ with $h$ heads. For each head $j\in[h]$, denote its cached keys and values by $\mK^j=[\vk_1^j,\dots,\vk_N^j]^\top$ and $\mV^j=[\vv_1^j,\dots,\vv_N^j]^\top$, both in $\sR^{N\times (d/h)}$. A $\vkappa$-\emph{\ourmethod multi-head attention layer} $\sattn$ is specified by $h$ query projection heads $\mW_{Q,j} \in \sR^{d\times(d/h)}$ and $h$ integer tuples $\vkappa_j = (\kappa_1^j,\dots,\kappa^1_{m_j})$ with $m_j\leq m$ and $\kappa^j_i\leq\kappa_i \quad \forall i \in [m_j]$.  Given an input sequence $\mX=[\vx_1,\ldots,\vx_N]^\top\in\sR^{N\times d}$, each head $j\in[h]$ computes queries $\mQ_j=\mX\mW_{Q,j}$ and reuses the full attention keys and values, to compute 
 \begin{equation}
\begin{aligned}
\mathrm{head}_j(\mX)
&=\softmax\!\left(
\frac{\mQ_j\mK_j^\top}{\sqrt{d/h}}+\mM_{\vkappa_j}
\right)\mV_j,\\
\attn(\mX)
&=\mX+[\mathrm{head}_j(\mX)]_{j=1}^{h}\mW_O.
\end{aligned}
\end{equation}
Here, $\mW_O\in\sR^{d\times d}$ is a learned output projection, the softmax is applied row-wise, and the  mask $\mM_{\vkappa_j}\in\sR^{N\times N}$ satisfies $\mM_{i,t}=0$ for $t\in \mathcal N_j^{\vkappa_j}(i)$ and $\mM_{i,t}=-\infty$ otherwise. We refer to $m$ as the retrieval depth of $\sattn$.
\end{definition}

Importantly, \ourmethod preserves the asymptotic efficiency of top-$\kappa$ access layers. For fixed retrieval depths and widths $\vkappa$, each neighborhood $\mathcal N_j^{\vkappa_j}(i)$ contains a constant number of positions independent of the sequence length $N$. Consequently, each sparse-attention layer attends to only a fixed number of KV entries per token, retaining the same asymptotic computational and memory complexity as top-$\kappa$ access. Although the neighborhood size can grow exponentially with retrieval depth $m$, we show later that even the minimal choice $\kappa_1=\cdots=\kappa_m=1$ is sufficient to improve expressive power, and that higher retrieval depth increases expressivity even for a matched number of retrieved tokens.

Beyond these theoretical guarantees, \ourmethod also admits a practical implementation strategy. HySparse~\citep{gao2026hysparse} demonstrates that attention-based routing information can be extracted alongside full attention using a modified FlashAttention kernel, without materializing the full attention matrix, and employs this approach to train 7B language models. Building on this design, multi-hop retrieval requires only an additional traversal of the selected token indices to construct the expanded neighborhoods, which can then be shared across all subsequent sparse-attention layers. The underlying KV cache and full-attention computation remain unchanged, making \ourmethod a natural extension of existing efficient implementations. We now formally define \ourmethod hybrid blocks.

\begin{definition}
\label{def:mura_hybrid_block}
For nonnegative integers  $L_{\mathrm{pre}}$ and $L_{\mathrm{post}}$, and $\vkappa \in \sN^m$ a \emph{$\vkappa$-\ourmethod hybrid block} $\hybrid_{\vkappa}$ of width $d$ consists of $L_{\mathrm{pre}}$ standard recurrent layers $\rnn^{\mathrm{pre}}_{1}, \dots, \rnn^{\mathrm{pre}}_{L_{\mathrm{pre}}}$, a causal full-attention layer $\attn$, and $L_{\mathrm{post}}$ $\vkappa$-\ourmethod attention layers $\sattn^{\mathrm{post}}_{1}, \dots, \sattn^{\mathrm{post}}_{L_{\mathrm{post}}}$. All post-attention layers have access to the KV cache of layer $\attn$. For an input sequence $\mX\in\sR^{N\times d}$, the block computes
\begin{equation}
\hybrid_{\vkappa}(\mX)
= \bigl(
\sattn^{\mathrm{post}}_{L_{\mathrm{post}}}
\circ\cdots\circ\sattn^{\mathrm{post}}_1
\circ\attn
\circ\rnn^{\mathrm{pre}}_{L_{\mathrm{pre}}}
\circ\cdots\circ\rnn^{\mathrm{pre}}_1
\bigr)(\mX).
\end{equation}
\end{definition}

\subsection{The Expressive Power of \ourmethod Hybrid Blocks.}

We study the effects of retrieval depth on the  expressive
power of \ourmethod hybrid blocks. Our first result shows
that even the minimal choice of retrieving a single
predecessor per hop is sufficient to efficiently compute
function composition, with the number of hops matching
the number of composed functions. The full proof appears in Appendix~\ref{app:mura_expressivity}, and illustrated in Figure~\ref{fig:multi_hop_retrieval}.

\begin{theorem}
\label{thm:mura_main_upper}
For every fixed $m\geq1$, a single $(\overbrace{1,\ldots,1}^m)$-\ourmethod block can compute $m$-function composition for every $n\geq2$ using a single attention head, a single \ourmethod sparse attention layer, no recurrent layers, width $d=O(m)$ independent of the sequence length $N$, and precision $p=O(log(N))$. 
\end{theorem}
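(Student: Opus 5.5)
The plan is to make the full-attention layer encode the pointer structure of the function tables in its top-$1$ neighbourhoods, and then use the single sparse layer only to read out the endpoint of the retrieved chain.

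\textbf{Positions and embeddings.} Index the tables so that table $f_\ell$ occupies positions $(\ell-1)n+1,\ldots,\ell n$, and the query sits at position $N=mn+1$. We want each token that reads value $v$ to point, via top-$1$ attention, to the entry $v$ of the previous table. Concretely:
\begin{itemize}
\item the query $x$ should select position $(m-1)n+x$, the entry $f_m(x)$;
\item the token at position $(\ell-1)n+y$, whose value is $v=f_\ell(y)$ with $\ell\ge 2$, should select position $(\ell-2)n+v$, the entry $f_{\ell-1}(v)$.
\end{itemize}

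With no recurrent layers, the embedding must carry everything the attention layer needs. Each token's embedding therefore contains:
\begin{itemize}
\item its value $v$;
\item its position information: the table index $\ell$ and the within-table index $y$;
\item a constant coordinate.
\end{itemize}
We assume the embedding is allowed to depend on position, as in the appendix's sequence-model convention. If that is not allowed, the tables would have to be located another way. Since $\ell\le m+1$, one-hot or scalar encodings of $\ell$ give width $O(m)$.

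\textbf{Full-attention scores.} The key of position $(\ell'-1)n+y'$ encodes the pair $(\ell',y')$. The query of a token at table $\ell$ with value $v$ targets the pair $(\ell-1,v)$, and the final token targets $(m,x)$. We use the standard angular trick, which gives a score proportional to
\[
\cos\bigl(2\pi(y'-v)/n\bigr)\;+\;C\cdot\mathbf 1[\ell'=\ell-1].
\]
The table match $\mathbf 1[\ell'=\ell-1]$ is realised as an inner product of one-hot vectors of length $m+1$, scaled by a large constant $C$. This score is uniquely maximised at the intended position. The gap between the best and second-best scores is $\Omega(1/n^2)$, so $O(\log N)$ bits of precision suffice to preserve the argmax. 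Because top-$\kappa$ selection depends only on the ordering of scores, no temperature scaling is needed.

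Causality holds because every intended target lies at an earlier position. Tokens of the first table have no valid target and point arbitrarily; this is harmless.

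\textbf{Reaching the answer in $m$ hops.} Applying the neighbourhood recursion of equation~\eqref{eq:mura_main_retrieval} with all $\kappa_r=1$, the neighbourhood $\mathcal N^{(1,\ldots,1)}(N)$ contains the following chain, indexed by $r=0,1,\ldots,m$:
\begin{itemize}
\item $r=0$: position $N$ itself;
\item $r=1$: position $(m-1)n+x$, holding $f_m(x)$;
\item $r=2$: position $(m-2)n+f_m(x)$, holding $f_{m-1}(f_m(x))$;
\item in general, hop $r$ lands at table $m-r+1$, at the entry holding $(f_{m-r+1}\circ\cdots\circ f_m)(x)$.
\end{itemize}
Hop $m$ therefore lands in the first table, at the entry holding $f_1(\cdots f_m(x))$, which is the desired output. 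Each node's self-inclusion adds at most one position to the neighbourhood per step. The sparse set is thus exactly these $m+1$ nodes, one per table plus the query, together with possibly a stray node from a table-$1$ token.

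\textbf{Sparse layer readout.} The single sparse layer issues a query at the last token that attends to whichever retrieved key lies in table $1$. It does so by matching the table one-hot, scaled large enough to produce near-hard attention. A table-$1$ token's own pointer is not expanded further, because the recursion stops at $m$ hops, so exactly one retrieved node lies in table $1$.

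For this to work, the keys $\mK$ must carry the table one-hot of the key's own position, and the values $\mV$ must carry the token's value $v$ in some coordinate. Both are linear functions of the embedding, so this is unproblematic. The output then contains $f_1\circ\cdots\circ f_m(x)$ up to an exponentially small error, which the unembedding rounds to the correct answer.

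\textbf{Main obstacle.} I expect the hardest part to be the precision analysis of the full-attention scores. We must guarantee a strict unique argmax among up to $N$ causal keys, given that the pointer keys and the readout keys share one key space. Handling this will likely require splitting the key into separate coordinate blocks and verifying that the large-$C$ table term does not swamp the $1/n^2$ gap at $O(\log N)$ bits. We must also confirm that the tie-breaking rule never selects a wrong position.
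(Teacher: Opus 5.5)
Your proposal is correct and takes the same route as the paper. The full-attention top-$1$ links point each entry of table $\ell$ to the entry of table $\ell-1$ indexed by its value, the $m$-hop neighbourhood $\mathcal N^{(1,\ldots,1)}(N)$ is exactly the $(m+1)$-node composition chain, and one sparse layer reads off its unique table-$1$ entry. The only differences are in the gadgets. The paper uses the integer score $-(b-f_r(i))^2+\mathbf 1\{s=r-1\}$ instead of cosine matching, which gives gap $1$ with exactly computed logits and simplifies the precision analysis you flag as the main obstacle. For the readout, it uses a zero sparse query with uniform weights $1/(m+1)$, storing $z_m/(m+1)$ and $1/(m+1)$ and decoding exactly via logits $(2az_m-a^2)/(m+1)$, instead of your near-hard selection of the table-$1$ key.
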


\begin{proof}[Proof sketch]
Consider an input sequence $\va(f_1,\ldots,f_m,x)$
as in Definition~\ref{def:function_composition}.
We embed each token corresponding
to $f_i(j)$ via a function of its table index $i\in[m]$,
input argument $j\in[n]$, and value $f_i(j)\in[n]$.\footnote{
While this encoding differs from standard positional embeddings, it
can be recovered from them using a single pre-attention
recurrent layer. See Appendix~\ref{app:preliminaries}
for details.
}
We can then construct a full-attention layer $\attn$ such
that, for every $i>1$, the unique top-$1$ predecessor
of the token representing $f_i(j)$ is the token representing
$f_{i-1}(f_i(j))$. Similarly, the top-$1$ predecessor
of the final query token $x$ is the token representing
$f_m(x)$. Consequently, the $m$-hop neighborhood
$\mathcal N^{(1,\ldots,1)}(x)$ contains precisely the
tokens along the composition chain
\begin{equation}
x \to f_m(x) \to (f_{m-1}\circ f_m)(x)
\to \cdots \to (f_1\circ\cdots\circ f_m)(x).
\end{equation}

We set the full-attention output projection $\mW_O$ to zero,
preserving the input representations through the residual connection. A single sparse-attention layer then attends
to the retrieved composition chain. 
Since only the final token in the retrieved chain belongs to the first function table, we choose the full-attention value projection so that the last coordinate of each cached value contains the function value for tokens from the first table and zero otherwise. Setting the sparse-attention query to zero induces uniform attention over the $m+1$ retrieved tokens, yielding $\frac{1}{m+1}(f_1\circ\cdots\circ f_m)(x)$ in the last output coordinate. Rescaling by $m+1$ recovers the desired composition, completing the construction. 
\end{proof}


We next establish a strict expressivity hierarchy in retrieval depth: each additional hop enables higher-order function composition with logarithmic capacity, while blocks with fewer hops require polynomial capacity, regardless of the number of (fixed) retrieved KV entries. This highlights the importance of retrieval depth over retrieval size. The full proof appears in Appendix~\ref{app:mura_lower_proof}.

\begin{theorem}
\label{thm:mura_main_lower}
For $\vkappa \in \sN^m$, with $m\geq2$, let $\hybrid_{\vkappa}$ be a $\vkappa$-\ourmethod model with $\vkappa=(\kappa_1,\dots, \kappa_m)$, $L_{\mathrm{pre}}$ pre-attention layers and
$L_{\mathrm{post}}$ post-attention layers, width $d$, and precision $p$.
Define its capacity as
\begin{equation}
\capacity(\hybrid_{\kappa})
=  pd m\cdot\Pi_{i=1}^m\kappa_i.
\end{equation}
If $\hybrid_{\vkappa}$ computes $(m+1)$-function composition on every
instance of size $n$, then
\[
\capacity(\hybrid_{\vkappa})=\Omega(n^{1/(m+1)}).
\]
\end{theorem}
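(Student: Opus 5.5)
The plan is to mirror the proof of Theorem~\ref{thm:fixed_access_lower_bound}. We reduce to the one-way $(m+2)$-player NOF lower bound of \citet{viola2009one} for $(m+1)$-function composition, which gives $\Omega(n^{1/(m+1)})$. Concretely, we show that any $\vkappa$-\ourmethod block computing $F_{m+1,n}$ yields a protocol of cost $O(\capacity(\hybrid_{\vkappa}))$. Write the input as blocks $B_1,\dots,B_{m+1}$ (the tables of $f_1,\dots,f_{m+1}$) followed by the query position $N$.

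The first step is a structural observation that makes \ourmethod easier to simulate than top-$\kappa$ access blocks. Each sparse layer at position $i$ uses only a query computed from $\vx_i$ and the \emph{fixed} keys and values of $\attn$ on the \emph{fixed} neighborhood $\mathcal N^{\vkappa}(N)$. Hence the output at position $N$ is a function of only three things: the full-attention output at $N$, the set $\mathcal N^{\vkappa}(N)$, and the cached pairs $(\vk_t,\vv_t)$, together with the score information needed to identify them, for $t\in\mathcal N^{\vkappa}(N)$. Post-attention representations of all other positions are irrelevant, so $L_{\mathrm{post}}$ does not enter the cost. By the definition of multi-hop neighborhoods, $|\mathcal N^{\vkappa}(N)|\le\prod_{i=1}^m(1+\kappa_i)=O(\prod_i\kappa_i)$.

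The second step is the pre-attention part, handled as in Theorem~\ref{thm:fixed_access_lower_bound}. Player $P_{m+2}$ knows every table. It sends the pre-attention recurrent states at the $m+1$ block boundaries and at position $N-1$, at cost $O(mdp)$ per pre-attention state, following the accounting of Theorem~\ref{thm:fixed_access_lower_bound}. With these, any player that knows a block can compute that block's representations, keys, values and queries, and any player that knows $x$ can compute the query at $N$.

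We then use two mergeability facts. First, the full-attention output at $N$ decomposes over disjoint position sets into (unnormalized weighted value sum, normalizer) pairs. Second, a top-$\kappa$ selection over a union of sets is recovered from the per-set top-$\kappa$ lists with their scores, ties being broken by position. For a token $t\in B_i$, causality implies that $\mathcal N^{(\kappa)}(t)$ depends only on $B_1,\dots,B_i$ and not on $x$.

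The third step is the traversal, simulated hop by hop as the message passes $P_{m+1}\to P_m\to\cdots\to P_1$. Player $P_j$ knows $x$ and every block except $B_j$. It therefore computes everything that avoids $B_j$ and forwards, for each retrieved token it can resolve, its index, score and KV entry, at $O(dp)$ bits per token. The contributions involving $B_j$ are left as mergeable partial summaries for later players, each of whom knows $B_j$.

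The count that should make this work is that the chain has $m$ hops but there are $m+1$ blocks. Each player needs to resolve only the part of the frontier that it can see, and each message carries $O(dp)$ bits for each of at most $O(\prod_i\kappa_i)$ tokens per hop. Over $m$ hops this totals $O(pdm\prod_i\kappa_i)=O(\capacity(\hybrid_{\vkappa}))$. Finally, $P_1$ merges all partial summaries, recovers the attention output and the neighborhood $\mathcal N^{\vkappa}(N)$ with its KV entries, runs the sparse layers at position $N$, and outputs $(f_1\circ\cdots\circ f_{m+1})(x)$.

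The main obstacle is the third step: ordering the hops so that each frontier token's neighborhood is computed by a player that knows all blocks it depends on. A frontier token in $B_i$ needs $B_1,\dots,B_i$, while the identity of that token depends on $x$ and on later blocks. I would first try an invariant: after $P_j$ speaks, every frontier token lying outside $B_1,\dots,B_{j-1}$ has been fully expanded, and every unexpanded frontier token carries top-$\kappa$ summaries for all blocks other than $B_{j-1}$. The fact that only $m$ hops are available across $m+1$ blocks is exactly what should leave the last player $P_1$ with nothing in $B_1$ that it must expand without knowing $f_1$. If this invariant fails, the fallback is to let each player forward per-block top-$\kappa$ summaries for every frontier token. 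This still costs $O(dp)$ per token-block pair, and absorbing the extra factor of $m$ into constants keeps the total at $O(\capacity(\hybrid_{\vkappa}))$.
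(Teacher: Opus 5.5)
Your outline has the same ingredients as the paper's proof: boundary states from $P_{m+2}$, mergeable softmax statistics, mergeable top-$\kappa$ lists, and the correct observation that only the full-attention output at $N$ and the cached entries on $\mathcal N^{\vkappa}(N)$ matter. However, the step you flag as the main obstacle is the actual content of the proof, and the invariant you propose for it cannot hold. Already for $j=m+1$ it requires the query position $N$ to be fully expanded once $P_{m+1}$ has spoken. Ranking the predecessors of $N$ needs the keys of the $f_{m+1}$ table, which $P_{m+1}$ does not see, and $P_{m+2}$ cannot compute $\vq_N$ without $x$.

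More generally, assigning expansions according to the table a token lies in cannot work. A token of the $f_{m+1}$ table reached at hop~$1$ generally has predecessors in all $m+1$ tables, and each of $P_1,\dots,P_{m+1}$ is missing one of them. The missing idea is to organize by hop rather than by location, and to split every selection between two consecutive players. Hop $r$ is begun by $P_{m+2-r}$, which, for each $s$ in the current frontier, computes the top-$\kappa_r$ predecessors of $s$ outside the table of $f_{m+2-r}$. It is completed by $P_{m+1-r}$, which knows that table, selects inside it, merges, and immediately begins hop $r+1$ while excluding its own hidden table $f_{m+1-r}$. Each player finishes one hop and starts the next, so the $m$ hops use exactly the $m+1$ players that see $x$. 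In particular, $P_1$ only has to finish hop $m$ using $f_2$; this is where $m$ versus $m+1$ really enters.

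For the handoff to work, each forwarded record must also carry the retrieved token's original query; the paper sends $(t,\vq_t,\vk_t,\vv_t)$. The player starting the next hop cannot compute $\vq_t$ when $t$ lies in its hidden table, and your records contain only index, score and KV entry. Your per-block fallback becomes a correct protocol once it is run on this schedule with queries included.

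A second issue is shared by the paper's proof, which sends only the last layer's states $\vh^{\mathrm{pre},L_{\mathrm{pre}}}_{rn}$. For a player to run the pre-attention stack through a table it knows, $P_{m+2}$ must send the states of all $L_{\mathrm{pre}}$ pre-attention layers at every table boundary. The last layer's inputs inside the table depend on the earlier layers' states. That costs $O(pd\,m\,L_{\mathrm{pre}})$ bits, which $\capacity(\hybrid_{\vkappa})$ does not charge and your final count omits. This is not just bookkeeping. Take $L_{\mathrm{pre}}=n$ recurrent layers of width $O(m)$: layer $k$ can store $f_{m+1}(k)$ and write it into the query position when $x=k$. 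An adaptation of the construction in Theorem~\ref{thm:mura_main_upper}, with $\vkappa=(1,\ldots,1)$, then performs the remaining $m$ lookups, giving $\capacity(\hybrid_{\vkappa})=O(\log n)$ for fixed $m$. So the bound can only be established with $L_{\mathrm{pre}}$ added to the capacity, or for $L_{\mathrm{pre}}$ bounded independently of $n$.
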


\begin{proof}[Proof sketch]
We extend the reduction of
Theorem~\ref{thm:fixed_access_lower_bound} to an
$(m+2)$-player NOF protocol for $(m+1)$-function
composition. Given a $\vkappa$-\ourmethod block, the
players simulate its forward pass on sequences $\va(f_1,\ldots,f_{m+1},x)$ of
length $N=(m+1)n+1$.
Player $P_{m+2}$ knows all function tables but not
the query $x$, while each $P_i$, for $i\in[m+1]$,
knows $x$ and all function tables except $f_i$. 

\textbf{Full attention computation:}
Since $P_{m+2}$ knows all tokens preceding $x$,
it can compute their representations and transmit
the recurrent states needed at the boundaries
between function tables.
As in the previous proof, $P_{m+1}$ computes the
contribution of the first $m$ function tables to
the final token's attention output, represented by
an unnormalized weighted sum and its softmax
normalization factor.
Player $P_m$ then adds the contribution of the
remaining tokens corresponding to function $m+1$, recovering the full-attention
output. 

\textbf{Retrieval:} The players simulate the $m$ retrieval hops sequentially, starting with $P_{m+1}$.
At each stage, player $P_i$ computes attention scores with respect  to the previously retrieved tokens  and selects the top-$\kappa_{m+2-i}$
candidates among all tokens, except those corresponding to $f_i$,
the only function table it cannot access.
It transmits these candidates and their attention scores to $P_{i-1}$,
which knows $f_i$ and can therefore complete the global
top-$\kappa_{m+2-i}$ selection.
Using the transmitted recurrent states, $P_{i-1}$
continues the computation and prepares the candidates for the next hop.
In this way, each successive player completes one additional retrieval hop.
Since the block performs only $m$ hops, player $P_1$ can recover
the final retrieved KV entries, complete the forward pass,
and output $(f_1\circ\cdots\circ f_{m+1})(x)$.

The resulting protocol has communication cost $O(\capacity(\hybrid))$.
Applying the $(m+2)$-player NOF pointer-jumping lower bound yields
$\capacity(\hybrid)=\Omega(n^{1/(m+1)})$, as claimed.

\end{proof}

\section{Empirical Evaluation}
\label{sec:experiments}

We empirically validate our theoretical findings on a synthetic function-composition benchmark. Our results show that existing hybrid architectures struggle to learn compositions, whereas \ourmethod successfully learns increasingly complex compositions as retrieval depth grows, demonstrating the benefits of multi-hop access to the shared KV cache. We provide the main experimental details and results below, with a more comprehensive description in Appendix~\ref{app:experimental_details}.


\paragraph{Synthetic benchmark.}
We construct a synthetic dataset of 250,000 sequences, each encoding five functions $f_1,\ldots,f_5:[n]\to[n]$, where $n\in\{25,\ldots,30\}$ and all function values are sampled independently and uniformly from $[n]$. The function tables are serialized as in Definition~\ref{def:function_composition}, omitting the final query token. We train hybrid blocks on a causal token-prediction task that jointly evaluates function composition at depths $m\in[4]$. Specifically, each token representing $f_i(j)$ is assigned multiple composition targets by treating its value as the query $x=f_i(j)$ and applying the preceding functions:
\begin{equation}
f_{i-1}(f_i(j)),\;
(f_{i-2}\circ f_{i-1})(f_i(j)),\;\ldots,\;
(f_1\circ\cdots\circ f_{i-1})(f_i(j)).
\end{equation}
These targets are predicted from each token's final representation.
We evaluate accuracy at each composition depth $m \in [4]$ over all tokens for which the corresponding composition is defined. Models are trained for 500 epochs using cross-entropy loss over all composition targets. The dataset is split into 200,000 training, 25,000 validation, and 25,000 test examples, and we report test accuracy at the checkpoint selected by the best validation score.

\paragraph{Baselines and architecture details.}
All experiments use a single hybrid block comprising of one full-attention layer followed by four efficient sequence layers, each with four heads and hidden dimension $d=16$. All models use the same positional encoding, supervision, and optimizer. We evaluate $\vkappa$-\ourmethod with $\vkappa\in\{(32),(16,1),(11,1,1),(8,1,1,1)\}$, corresponding to retrieval depths of one through four hops. These configurations are chosen to have approximately the same retrieval budget of 32 tokens (33 for $(11,1,1)$). The single-hop configuration $(32)$ corresponds to a simplified version of HySparse~\citep{gao2026hysparse}. To facilitate learning at different composition depths, each head $j$ of a \ourmethod sparse-attention layer uses a retrieval tuple $\vkappa_j=(\kappa_1,\ldots,\kappa_{\min(j,m)})$, allowing earlier heads to access only shallower retrieval neighborhoods. We compare \ourmethod against Gated DeltaNet \cite{yang2024gated}, linear attention~\citep{katharopoulos2020transformers}, and sliding-window attention~\citep{beltagy2020longformer}. For sliding-window attention, we evaluate window sizes of both 32 and 16 tokens. While a window of 32 matches the retrieval budget of \ourmethod, each sliding-window layer maintains its own KV cache, resulting in a combined cache size comparable to an additional full-attention layer in our setting. We therefore also consider a window size of 16 to provide a more memory-efficient baseline.


\begin{table*}[t]
\centering
\caption{Function-composition accuracy (\%), reported as
mean $\pm$ standard deviation over three seeds.
\protect\colorbox{rankblue}{\strut Blue},
\protect\colorbox{rankgreen}{\strut green}, and
\protect\colorbox{rankpurple}{\strut purple}
indicate the first, second, and third highest
mean accuracies in each column, respectively.}
\label{tab:composition_accuracy}
\small
\begin{tabular}{lcccc}
\toprule
Model & $m=1$ & $m=2$ & $m=3$ & $m=4$ \\
\midrule
\ourmethod{} $(32,)$
& $83.05\pm29.35$
& $44.47\pm23.64$
& $5.81\pm3.49$
& $3.68\pm0.11$ \\

\ourmethod{} $(16,1)$
& \cellcolor{rankblue}$100.00\pm0.01$
& \cellcolor{rankgreen}$99.30\pm1.22$
& $30.31\pm23.22$
& $3.73\pm0.10$ \\

\ourmethod{} $(11,1,1)$
& \cellcolor{rankblue}$100.00\pm0.00$
& \cellcolor{rankblue}$100.00\pm0.00$
& \cellcolor{rankgreen}$99.99\pm0.01$
& \cellcolor{rankgreen}$27.81\pm24.98$ \\

\ourmethod{} $(8,1,1,1)$
& \cellcolor{rankblue}$100.00\pm0.00$
& \cellcolor{rankblue}$100.00\pm0.00$
& \cellcolor{rankblue}$100.00\pm0.00$
& \cellcolor{rankblue}$100.00\pm0.00$ \\
\midrule
Linear attention
& \cellcolor{rankpurple}$99.97\pm0.01$
& $57.83\pm10.75$
& $14.97\pm5.66$
& \cellcolor{rankpurple}$8.78\pm0.62$ \\

Sliding window (16)
& \cellcolor{rankblue}$100.00\pm0.00$
& $46.83\pm8.49$
& $3.76\pm0.23$
& $3.72\pm0.08$ \\

Sliding window (32)
& \cellcolor{rankgreen}$99.99\pm0.01$
& \cellcolor{rankpurple}$85.18\pm4.30$
& \cellcolor{rankpurple}$59.86\pm4.18$
& $3.65\pm0.02$ \\

Gated DeltaNet
& $82.07\pm31.03$
& $25.55\pm16.14$
& $13.89\pm10.93$
& $7.97\pm2.48$ \\
\bottomrule
\end{tabular}
\end{table*}


\paragraph{Discussion.}
Table~\ref{tab:composition_accuracy} demonstrates a clear relationship between retrieval depth and the ability to learn function composition. \ourmethod blocks achieve near-perfect accuracy on $m$-function composition when their retrieval depth is at least $m$, while shallower configurations fail to fully learn the task. Interestingly, shallower configurations sometimes achieve accuracy substantially above random guessing, but remain far from perfect and exhibit considerably higher variance across seeds. Consistent with our theoretical findings, the baseline hybrid blocks generally perform well on single-function composition but fail on deeper compositions. Even sliding-window attention with a window size of 32, which requires additional KV caches comparable in total size to another full-attention layer, struggles on more complex function composition.  Together, these results demonstrate the expressive advantage of \ourmethod in learning complex multi-hop computations that remain challenging for existing hybrid architectures.

\section{Related Work}
\label{sec:related_work}

\paragraph{Efficient sequence models and shared attention memory.}
Reducing the computational cost of full attention has motivated extensive research into efficient sequence architectures. Early work explored sparse attention~\citep{child2019generating,beltagy2020longformer,zaheer2020big}, while subsequent approaches developed efficient recurrent layers ~\citep{gu2021efficiently,gu2023mamba,yang2024gated}. These architectures were later incorporated into  hybrid architectures that combine them with full attention, including Kimi K3~\citep{team2026kimi}, Gemma 3~\citep{gemmateam2025gemma3}, Jamba~\citep{lieber2024jamba}, and OLMo Hybrid~\citep{merrill2026olmohybrid}. These architectures typically maintain separate memory for their efficient layers, however recent work explores the idea  of memory sharing: Cross-Layer Attention~\citep{brandon2024crosslayer} shares KV activations across layers, while DART~\citep{qian2026dart} attends to compressed recurrent chunk states. Both reduce memory requirements but still attend over a growing number of entries. HySparse and HySparse2~\citep{gao2026hysparse,hysparse2} instead reuse full-attention KV caches for bounded sparse retrieval. Closely related to our work, Graph Machine~\citep{hou2026graphmachine} also employs multi-hop sparse retrieval, but maintains learned weighted edges and updates KV representations across sparse layers, limiting its memory savings. In contrast, \ourmethod performs retrieval over a fixed attention graph, reusing the original KV cache across all sparse layers.

\paragraph{Expressivity of sequence models.}
The expressive power of sequence models has been extensively studied, with circuit-complexity analyses establishing limitations of log-precision Transformers~\citep{merrill2023parallel} and showing how recurrent depth, padding, and chain of though improve these limitations ~\citep{dehghani2018universal,merrill2026exact,merrill2024expressive}. Other work investigates the architectural resources required for tasks such as sparse averaging, Match3, and function composition~\citep{sanford2023representational,sanford2024parallel,peng2024limitations,chen2024theoretical,kozachinskiy2025strassen}. Related analyses study  limitations of linear RNNs~\citep{merrill2024illusion,sarrof2024expressive}, with architectural modifications such as negative transition eigenvalues improving their state-tracking capabilities~\citep{grazzi2025unlocking}. More recently, \citet{merrill2026olmohybrid} established expressive advantages of combining attention and recurrence, connecting these theoretical findings to the design of a large-scale hybrid model. Closely related to our work, \citet{ye2026hybridhierarchy} established lower bounds for hybrid architectures on sequential function composition, including architectures with multiple full-attention layers. While our analysis focuses on a single full-attention block, we establish a stronger capacity lower bound that allows arbitrary recurrent depth and extends to fixed top-$\kappa$ shared-memory access.


\section{Conclusion}
\label{sec:conclusion}


We studied how sharing attention memory can enhance the expressive power of hybrid blocks without increasing their asymptotic computational or memory complexity. Using function composition as a model of multi-hop reasoning, we first established capacity lower bounds for existing hybrid blocks with either no KV-cache access or fixed top-$\kappa$ retrieval, revealing fundamental limitations of these architectures. To overcome these limitations, we introduced \ourmethod, a simple yet powerful extension that enables multi-hop retrieval from a shared KV cache. Our theoretical results establish a strict expressivity hierarchy in retrieval depth: $m$ hops suffice to compute $m$-function composition with constant width and logarithmic capacity, whereas computing $(m+1)$-function composition with only $m$ hops requires polynomial capacity. Synthetic experiments with comparable retrieval budgets support this hierarchy and demonstrate the expressive advantages of \ourmethod over existing hybrid architectures. Together, our findings highlight retrieval depth as a fundamental architectural principle for designing more expressive hybrid language models.

\section{Acknowledgements}
 HM is supported by the Israel Science Foundation through a personal grant (ISF 264/23) and an equipment grant (ISF 532/23).

\bibliographystyle{plainnat}
\bibliography{references}

\appendix

\section{Additional Preliminaries}
\label{app:preliminaries}

\paragraph{Notation.}
We retain the notation of the main text and introduce new conventions, which will be helpful for the proofs below. 
The vectors $\ve_1,\dots, \ve_m$ represent the standard basis, with dimension specified where it is used, and $\mathbf0_b$ is the zero vector in $\sR^b$. The indicator of a set $E$ is $\mathbbm{1}_E$.
We write $d_h=d/h$ for the dimension of an attention head. 
In the communication proofs, $\vz_t$ denotes the representation at position $t$ immediately before full attention, and $\vq_t^j,\vk_t^j,\vv_t^j$ are its original query, key, and value in head $j$. 
These original vectors, and the neighborhoods determined by the full-attention scores $\alpha_{i,t}^{(j)}$, remain fixed throughout subsequent shared-cache computations. The notation $\operatorname{Top}_{j,k}(u;T)$ denotes the highest scoring $\min\{k,|T|\}$ positions in $T\subseteq[u-1]$, ranked by decreasing score and then decreasing position with respect to query $q_u^j$. 

\paragraph{Sequence modeling: embedding and decoding.}
Throughout this paper, we consider hybrid blocks which take as input sequences $\va=(a_1,\ldots,a_N)$  where each sequence element takes value in a finite alphabet  $a_i \in [n] \quad \forall i \in [N]$. The hybrid block then predicts an output token $\hat a \in [n]$. This is done through two additional components to the hybrid block, the embedding and unembedding layers.  An embedding layer $\te_n:[n]\times \sN \to\sR^d$ produces
\begin{equation}
\te_n(\va)=[\te_n(a_1,1),\ldots,\te_n(a_N,N)]^\top.
\end{equation}
At position $t$, this embedding depends only on $a_t$, and  $t$. Applying a block $\hybrid$ gives $\mY=\hybrid(\te_n(\va))=[\vy_1,\ldots,\vy_N]^\top$. An unembedding layer  $\ue_n:\sR^d\to\sR^n$ then produces output logits. The predicted symbol is
\begin{equation}
\widehat a=\argmax_{a\in[n]}[\mlp_n(\vy_N)]_a.
\label{eq:appendix_decoding}
\end{equation}

In practice, the embedding layer typically combines learned token embeddings with positional encodings:
\begin{equation}
    \te_n(a_i,i)=\vv_{a_i}+\vp_i,
\end{equation}
where $\vv_{a_i}\in\sR^d$ is the learned embedding of token $a_i$ and $\vp_i\in\sR^d$ is a positional encoding, such as a sinusoidal encoding. Similarly, the unembedding layer is typically a linear projection. In this manuscript, we leave both maps unrestricted, as our lower bounds hold in this more general setting. For our upper-bound construction (Theorem~\ref{thm:mura_main_upper}), the required embedding maps can be implemented using standard embeddings followed by a single additional pre-attention recurrent layer, while retaining a linear unembedding.

\paragraph{Normalization and feed-forward networks.}
In practice, sequence layers are typically augmented with layer normalization and position-wise feed-forward networks (FFNs), which we omit from our theoretical formulations for simplicity. Specifically, given an input sequence $\mX=[\vx_1,\dots,\vx_N]^\top\in\sR^{N\times d}$ and a sequence layer $\layer$, we apply the following pre-normalization updates:
\begin{equation}
\begin{aligned}
\mZ &= \operatorname{LN}_1(\mX),\\
\widetilde{\mX} &= \mX+\layer(\mZ),\\
\mY &= \widetilde{\mX}+\operatorname{FFN}(\operatorname{LN}_2(\widetilde{\mX})).
\end{aligned}
\label{eq:appendix_transformer_wrapper}
\end{equation}
We use this structure throughout our experiments. In our theoretical analysis, we omit these components, as they operate independently at each token and can be incorporated into the unrestricted computations of the recurrent layers used by all hybrid models, without affecting the communication-based lower bounds.

\paragraph{Finite precision and attention arithmetic.}
Throughout our analysis, hybrid blocks operate at finite precision $p=p(N)$, specifying the number of bits available to represent each scalar. Two common choices are constant precision, reflecting the fixed numerical precision used in practice, and logarithmic precision $p=\Theta(\log N)$, which is sufficient to distinguish token positions while limiting the information encoded in each scalar.

Since the order of arithmetic operations can affect finite-precision computations, we explicitly specify how attention is evaluated. For each query position $u$ and head $j$, we first compute the query--key dot products by summing coordinate-wise products in a fixed order, obtaining the quantized pre-softmax logits $\gamma_{u,t}^{(j)}$. We then apply a fixed positive, quantized exponential $\operatorname{Exp}_p$ to obtain the unnormalized attention weights
\begin{equation}
w_{u,t}^{(j)}=\operatorname{Exp}_p(\gamma_{u,t}^{(j)}).
\end{equation}
For any set of positions $T\subseteq[u]$, we define the partial unnormalized attention sum and normalization factor as
\begin{equation}
\begin{aligned}
\vs_j(u;T)&=\sum_{t\in T}\operatorname{Quant}_p(w_{u,t}^{(j)}\vv_t^j),\\
D_j(u;T)&=\sum_{t\in T}w_{u,t}^{(j)}.
\end{aligned}
\label{eq:appendix_attention_statistics}
\end{equation}
Here, $\operatorname{Quant}_p$ denotes quantization to the prescribed precision, applied to each product before accumulation.  The attention output is then computed by normalizing the accumulated sum:
\begin{equation}
\operatorname{Quant}_p\left(
\frac{\vs_j(u;T)}{D_j(u;T)}
\right).
\end{equation}
In particular, for disjoint sets $T_1,T_2\subseteq[u]$, exact accumulation ensures
\begin{equation}
\begin{aligned}
\vs_j(u;T_1\cup T_2)&=\vs_j(u;T_1)+\vs_j(u;T_2),\\
D_j(u;T_1\cup T_2)&=D_j(u;T_1)+D_j(u;T_2).
\end{aligned}
\label{eq:appendix_statistics_merge}
\end{equation}
Thus, independently computed partial statistics can be combined to reproduce the attention output exactly, as required by our communication protocols. This convention applies to both full and sparse attention throughout our proofs..

\paragraph{Common hidden width and recurrent memory.}
For simplicity, the main text uses a common width $d$ for token representations and recurrent states. An implementation may instead have token width $d_0$ and a larger persistent state. For example, sliding-window attention with a fixed window of $w$ predecessors can retain their keys and values in a state with $2wd_0$ coordinates, together with a constant amount of indexing information. Kernel-based linear attention with feature dimension $r$ retains a feature--value accumulator and a feature sum, with $r d_0+r$ coordinates for a single head~\citep{katharopoulos2020transformers}. For fixed $w$, or fixed $r$, these sizes are constant multiples of $d_0$, independent of sequence length. For the usual choice $r=d_0$ in a single head, however, linear attention has a state quadratic in $d_0$; this cost must also be counted.

To realize these implementations within the main-text definitions, choose $d$ large enough for every token representation and every layer's complete persistent state, and divisible by the head count. Pad smaller representations and states with zeros, and extend each map by restricting its input to the original coordinates and padding its output.

\section{Theorem proofs}
\label{app:theory}



\paragraph{Communication lower bound used in the proofs.}
We use the deterministic consequence of \citet[Theorem~3.2]{viola2009one}: for fixed $q\geq2$, the one-way $(q+1)$-player protocol in the main text needs $\Omega_q(n^{1/q})$ bits to compute $q$-function composition. To match its table formulation to tree pointer jumping, take $b=\lfloor n^{1/q}\rfloor$ and a $b$-ary tree with $q$ pointer choices and Boolean leaf labels. Index the nodes at each depth injectively in $[n]$, since there are at most $b^q\leq n$ of them. Encode the root choice by $x$, the remaining pointer maps by $f_q,\ldots,f_2$, and the leaf labels by $f_1$, using output symbols $1,2$. Assign fixed public values to unused table entries. This preserves each player's hidden input and the speaking order. A composition protocol therefore solves the tree problem. Announcing its decoded Boolean output adds one bit, reconciling the source's charged-output convention. The tree lower bound is $\Omega_q(b)$, giving the required bound.

\subsection{Proof of Theorem~\ref{thm:fixed_access_lower_bound}}
\label{app:fixed_access_proof}



\begin{proof}
Set $f_3,\ldots,f_m$ to the identity when $m>2$. On the resulting input of length $N=mn+1$, the required output is $f_1(f_2(x))$. We construct the three-player protocol of Definition~\ref{def:nof_protocol}: $P_3$ knows $f_1,f_2$ but not $x$; $P_2$ knows $f_1,x$ but not $f_2$; and $P_1$ knows $f_2,x$ but not $f_1$. The model, embedding, decoder,  All identity-table tokens, and task parameters are public. Messages are broadcast in the order $P_3\to P_2\to P_1$.

\paragraph{Player $P_3$: communicating boundary states.}
Write $\vh_t^{\mathrm{pre},\ell}$ and $\vh_t^{\mathrm{post},\ell}$ for the recurrent states after position $t$ in layer $\ell$ of the respective stacks.  $P_3$ is able to evaluate the entire block on positions $1,\ldots,N-1$, without knowing $x$. It broadcasts
\begin{equation}
\left(\vh_n^{\mathrm{pre},{L_{\mathrm{pre}}}},
      \vh_{N-1}^{\mathrm{pre},{L_{\mathrm{pre}}}}\right)
\qquad
\left(\vh_{N-1}^{\mathrm{post},\ell}\right)_{\ell\in[L_{\mathrm{post}}]}.
\label{eq:fixed_access_boundary_message}
\end{equation}

These messages require $pd(2+L_{\mathrm{post}})$ bits. The transmitted pre-attention states allow subsequent players who know $x$ to compute its pre-attention representation $\vz_N$, and players who know $f_2,x$ but not $f_1$ to reconstruct the pre-attention representations $\vz_{n+1},\ldots,\vz_N$. Moreover, once the attention output at position $N$ has been recovered,  the post-attention states at position $N-1$ allow the final player to complete the forward pass.

\paragraph{Player $P_2$: partial attention and candidate records.}
Using its knowledge of $f_1$,  player $P_2$ first computes the pre attention representations $\vz_t$, as well as  all key and value vectors for $t\in[n]$. From the boundary states at $N-1$ and its knowledge of $x$, it also computes $\vz_N$ and the final queries $\vq_N^j$ for every head. For each head $j$, it broadcasts
\begin{equation}
\vs_j(N;[n]),\qquad D_j(N;[n])
\label{eq:fixed_access_partial_attention}
\end{equation}
defined in equation~\ref{eq:appendix_attention_statistics}, costing $p(d+h)$ bits in total. It additionally selects
\begin{equation}
T_j=\operatorname{Top}_{j,\kappa}(N;[n])
\end{equation}
and transmits, for each $t\in T_j$, the record $(\vk_t^j,\vv_t^j, \alpha^j_{t,N})$ costing a total of $2p(d+h)\min\{\kappa,n\}$ bits.

\paragraph{Player $P_1$: reconstructing full attention and top-$\kappa$ access.}
Starting from the pre-attention boundary states $\vh_n^{\mathrm{pre},{L_{\mathrm{pre}}}}$, player $P_1$ evaluates the pre-attention representations $\vz_i$ for $n+1,\ldots,N$. This suffix consists of $f_2$, the public identity tables, and $x$. It also obtains  every suffix key and value together with the final queries  $\vq^j_N$ in the same way they were computed by $P_2$. For each head, it computes $\vs_j(N;\{n+1,\ldots,N\})$ and $D_j(N;\{n+1,\ldots,N\})$, adds the received prefix statistics, and normalizes with the model's rounding rule. Equation~\ref{eq:appendix_statistics_merge} guarantees that this is the exact full-attention head output, including the final token's self contribution. Writing $\vo_N^j\in\sR^{d_h}$ for the reconstructed head-output column vector, it obtains
\begin{equation}
\vx_N^{(0)}
=\vz_N+\mW_O^\top
\begin{bmatrix}\vo_N^1\\ \vdots\\ \vo_N^h\end{bmatrix}.
\label{eq:appendix_full_attention_reconstruction}
\end{equation}
Thus computing  the representation of the final token $x$ after the attention layer. 

For retrieval, it selects its own top-$\kappa$ candidates among $\{n+1,\ldots,N-1\}$ and merges them with $T_j$. We use the following elementary identity, with all rankings following the same total order:
\begin{equation}
\operatorname{Top}_{j,k}(u;A\cup C)
=
\operatorname{Top}_{j,k}\bigl(u;
\operatorname{Top}_{j,k}(u;A)\cup
\operatorname{Top}_{j,k}(u;C)\bigr)
\label{eq:topk_merge}
\end{equation}
for disjoint predecessor sets $A,C$. Indeed, an element excluded from its own set's top $k$ has at least $k$ elements of that set ranked ahead of it, and cannot be in the global top $k$. This proves that the merge recovers every selected predecessor, including when scores tie or one side has fewer than $k$ elements. Adding the known self position $N$ recovers $\mathcal N_j^{(\kappa)}(N)$ and all its keys and values.

Player $P_1$ now advances the final token through the post-attention stack. Starting with $\vx_N^{(0)}$, it knows at each layer the current token representation, that layer's state at $N-1$, and every entry of the fixed retrieved cache $\mC_N$. It therefore applies the specified transition and output maps exactly. Induction over these layers recovers the block's final representation, after which the public decoder outputs $f_1(f_2(x))$. No player needs the hidden table to execute its assigned computation.

\paragraph{Communication and the lower bound.} Since $h \leq d$ The total cost is at most
\begin{equation}
\begin{aligned}
&pd(L_{\mathrm{post}}+2)
+p(d+h)+2p(d+h)\min\{\kappa,n\}\\
&\hspace{1cm}\leq
pd(L_{\mathrm{post}}+2+4\kappa).
\end{aligned}
\label{eq:fixed_access_protocol_cost}
\end{equation}
The three-player NOF lower bound stated in the main text requires $\Omega(\sqrt n)$ bits for two-function composition. Applying it to this exact simulation completes the proof.
\end{proof}

\paragraph{Remark: comparison with prior bounds and tightness.}
Theorem~1.1 of \citet{ye2026hybridhierarchy} studies a similar setting, considering  multiple full-attention layers and a bounded number of recurrent layers between them. In their notation, taking $L=2$ leaves one full-attention layer, permits up to $2^{12}$ efficient layers after it, and gives a width--precision lower bound of  $N^{2^{-10}}=N^{1/1024}$.  Our $\Omega(\sqrt n)$ bound significantly improves this lower bound for the single hybrid block setting, applies to arbitrary post-attention recurrent depth with its state cost charged, and allows top-$\kappa$ cache access. 

We also highlight the role the of recurrent layers in the above bound. For a single attention layer without recurrence, the communication argument of \citet[Theorem~1]{peng2024limitations} shows a stronger lower bound of $\Omega(n\log n)$  for two-function composition.  The weaker $\Omega(\sqrt n)$ hybrid lower bound is consistent with RNNs providing an additional expressive advantage around a single attention layer. The tightness of our $\Omega(\sqrt{n})$ lower bound remains an open question.

\subsection{Proof of Theorem~\ref{thm:mura_main_upper}}
\label{app:mura_expressivity}


\begin{proof}
We construct an $m$-hop block $\hybrid_{\vkappa}$ with $\vkappa=(1,\ldots,1)$, $h=1$, $L_{\mathrm{pre}}=0$, $L_{\mathrm{post}}=1$, and $d=4m+7$ which solves $m$- function composition.

Fix $n\geq2$. Let $t_{r,i}=(r-1)n+i$ denote the position containing $f_r(i)$, and let $t_\star=mn+1=N$. In this proof, $\ve_1,\ldots,\ve_{m+1}$ are the standard basis of $\sR^{m+1}$, and we suppress the sole head index $j=1$ in queries, keys, values, and neighborhoods.

\paragraph{Embedding the function tables and query.}
We use the the following token embedding:
\begin{equation}
\te_n(f_r(i),t_{r,i})
=\begin{bmatrix}
1\\i\\i^2\\f_r(i)\\f_r(i)^2\\
\ve_r\\f_r(i)\ve_r\\\mathbf0_{2m}
\end{bmatrix},
\qquad
\te_n(x,t_\star)
=\begin{bmatrix}
1\\0\\0\\x\\x^2\\
\ve_{m+1}\\x\ve_{m+1}\\\mathbf0_{2m}
\end{bmatrix}.
\label{eq:mpj_function_embedding}
\end{equation}
Both vectors have $5+2(m+1)+2m=4m+7$ coordinates. These features depend only on the visible token and position; in particular, they do not reveal a function value at any other argument. The last $2m$ coordinates are reserved for sparse-attention output.

\paragraph{Full attention layer $\attn$.}
Let $\mS \in \sR^{(m+1)\times (m+1)}$ be a linear map such that $\mS\ve_1=\mathbf0$ and $\mS\ve_r=\ve_{r-1}$ for $2\leq r\leq m+1$. Choose $\mW_{Q}$ and $\mW_{K}$ so that their query and key vectors have the following coordinates, followed by zero padding to dimension $d$:
\begin{equation}
\vq_{t_{r,i}}=
\begin{bmatrix}2f_r(i)\\-1\\-f_r(i)^2\\\mS\ve_r\end{bmatrix},
\qquad
\vk_{t_{r,i}}=
\begin{bmatrix}i\\i^2\\1\\\ve_r\end{bmatrix},
\qquad
\vq_{t_\star}=
\begin{bmatrix}2x\\-1\\-x^2\\\mS \ve_{m+1}\end{bmatrix}.
\label{eq:mpj_query_key}
\end{equation}
The key of the final token is obtained by the same projection; its specific value is irrelevant to predecessor selection.

For $r\geq2$ and any causal predecessor $t_{s,b}<t_{r,i}$, the unscaled query--key score is
\begin{equation}
\langle\vq_{t_{r,i}},\vk_{t_{s,b}}\rangle
=-(b-f_r(i))^2+\mathbbm{1}\{s=r-1\}.
\label{eq:mpj_attention_score}
\end{equation}
The predecessor $t_{r-1,f_r(i)}$ has score $1$, whereas every other predecessor has score at most $0$. Likewise,
\begin{equation}
\langle\vq_{t_\star},\vk_{t_{s,b}}\rangle
=-(b-x)^2+\mathbbm{1}\{s=m\}
\end{equation}
uniquely selects $t_{m,x}$. The common positive scaling $1/\sqrt d$ preserves these rankings. Thus the top-$1$ neighborhoods determined by $\attn$ follow the required dependent lookups. Setting $c_0=x$ and $c_r=f_{m-r+1}(c_{r-1})$ for $r\in[m]$, it follows from the above that for every $r\in[m]$,
\begin{equation}
\mathcal N^{(\overbrace{1,\ldots,1}^{r})}(t_\star)
=\{t_\star\}\cup\{t_{m-s+1,c_{s-1}}:s\in[r]\}.
\label{eq:mpj_final_retrieval_set}
\end{equation}
Specifically, the index $t_{1,c_m}$ which contains the value of the full function composition, is contained in the above neighborhood.


Choose $\mW_{V}$ so that the full-attention value vectors are
\begin{equation}
\vv_{t_{r,i}}=
\begin{bmatrix}f_r(i)\ve_r\\\ve_r\end{bmatrix},
\qquad
\vv_{t_\star}=
\begin{bmatrix}x\ve_{m+1}\\\ve_{m+1}\end{bmatrix},
\label{eq:mpj_value_projection}
\end{equation}
padded to width $d$. For the embedded input matrix $\mX$, set $\mW_O=\mathbf0$, giving $\attn(\mX)=\mX$. The full-attention residual preserves every embedding coordinate, including the zero workspace.

\paragraph{\ourmethod attention layer  $\sattn$.}
Set $\mW^*_{Q}=\mathbf0$, so $\sattn$ attends uniformly to the $m+1$ retrieved positions using the shared values. From the retrieved value vectors, the only one whose first coordinate is not zero is $\vv_{t_{1, z_m}}$, whose first coordinate value is exactly $z_m = (f_1\circ\cdots\circ f_m)(x)$. Additionally, this is the only value vector whose $m+2$ entry is non zero, with value 1. after performing sparse attention, the value of the corresponding output vector at  these two coordinates are then 

\begin{equation}
\left(\frac{z_m}{m+1},\frac{1}{m+1}\right).
\label{eq:mpj_sparse_output}
\end{equation}

Choose $\mW^*_O$ the output projection matrix of $\sattn$ such that its only  nonzero entries are 
\begin{equation}
(\mW_O^{\mathrm{post},1})_{1,\,2m+8}=1,
\qquad
(\mW_O^{\mathrm{post},1})_{m+2,\,2m+9}=1.
\end{equation}
Thus the sparse layer writes $\frac{z_m}{m+1},\frac{1}{m+1}$ into workspace coordinates $2m+8,2m+9$. Their input coordinates are zero, so the sparse layer's residual adds no other contribution there.

For output symbol $a\in[n]$, let row $a$ of $\mU_n$ have coefficients $(2a,-a^2)$ on those two workspace coordinates and zero elsewhere. Its ideal logit is
\begin{equation}
2a\frac{z_m}{m+1}-a^2\frac{1}{m+1}
=\frac{2az_m-a^2}{m+1}
=\frac{z_m^2-(a-z_m)^2}{m+1}.
\label{eq:mpj_unembedding}
\end{equation}
The unique maximizer is $a=z_m$, with gap at least $1/(m+1)$. The linear decoder therefore extracts the composition value from an ordinary softmax-weighted average.

\paragraph{Finite precision.}
We show that the construction remains correct with $p=O(\log(mn))$ bits of precision per scalar. Choose a fixed-point format with dyadic grid spacing $\delta$ satisfying
\begin{equation}
\delta<
\min\left\{
\frac{1}{16\sqrt d},
\frac{1}{8(m+1)(n+1)^2},
\frac{1}{64N\sqrt d}
\right\},
\label{eq:mpj_precision_grid}
\end{equation}
taking the largest dyadic spacing satisfying this inequality. All unscaled query--key dot products are integers and can be computed exactly. We approximate the attention scaling factor $1/\sqrt d$ by a positive grid value $c$ with $|c-1/\sqrt d|\leq\delta$. Since each correct retrieval position has unscaled score $1$, while all competing positions have scores at most $0$, rounding the scaled logits preserves a gap of at least $c-2\delta>0$.

We choose $\operatorname{Exp}_p$ to be the exponential rounded to the fixed-point grid, with saturation to $[\delta,4]$ and $\operatorname{Exp}_p(0)=1$. The correct position consequently receives an unnormalized weight exceeding every competitor by at least $1/(2\sqrt d)$. Since the normalization factor is at most $4N$, the corresponding normalized attention scores differ by at least $1/(8N\sqrt d)$. The final constraint on $\delta$ ensures that this ordering is preserved even after quantization of the normalized scores. Thus, all top-$1$ retrievals remain unchanged.

In the sparse-attention layer, all queries are zero, so every attention logit is exactly zero. Choosing $\operatorname{Exp}_p(0)=1$ assigns equal weights to all $m+1$ retrieved positions. Since their value coordinates are integers, the unnormalized attention sums and normalization factor are computed exactly. Quantization of the normalized output introduces an error of at most $\delta$ in each workspace coordinate of~\eqref{eq:mpj_sparse_output}. Consequently, exact accumulation of the decoder's weighted sums, followed by final rounding, introduces an error of at most
\begin{equation}
(2n+n^2+1)\delta=(n+1)^2\delta
<\frac{1}{8(m+1)}
\end{equation}
in each output logit. Since the ideal logit gap is at least $1/(m+1)$, the correct symbol remains the unique maximizer, ensuring exact decoding.

Finally, we verify that all intermediate computations fit within the claimed precision. Since $\operatorname{Exp}_p$ takes values in $[\delta,4]$ and value coordinates have magnitude at most $n$, each quantized weighted-value coordinate has magnitude at most $4n+\delta$. Accumulating up to $N$ terms therefore requires representing values of magnitude $O(nN)=O(mn^2)$. All other intermediate quantities, including embeddings, routing products, and decoder products, have magnitude $O(n^2)$. Moreover, the choice of $\delta$ in~\eqref{eq:mpj_precision_grid} requires only $O(\log(mn))$ fractional bits. Thus, including integer bits and accumulation guard bits, $p=O(\log(mn))$ bits per scalar suffice. Since $d=4m+7$, $L_{\mathrm{pre}}=0$, and $L_{\mathrm{post}}=1$, the construction has capacity $\capacity(\hybrid_{\vkappa})=2pd=O(m\log(mn))$, which reduces to $O(\log N)$ for fixed $m$.
\end{proof}


\subsection{Proof of Theorem~\ref{thm:mura_main_lower}}
\label{app:mura_lower_proof}


\begin{proof}
Let $N=(m+1)n+1$, and assume $\hybrid_{\vkappa}$ is a $\vkappa$-\ourmethod block that is able to solve $q$-function composition, where $\vkappa \in \sN^m$. We generalize  the proof of Theorem~\ref{thm:fixed_access_lower_bound}, and show that there exists a $(m+2)$-player NOF protocol (Definition \ref{def:nof_protocol}) that is able to simulate the forward pass of $\hybrid_{\vkappa}$, with commmunication complexity $\capacity(\hybrid_{\vkappa})$.  The proof then follows from the classic $m+2$-player NOF protocol communication bound $\Omega({n^\frac{1}{m+1}})$ \cite{viola2009one}.

\paragraph{Player $P_{m+2}$: communicating boundary states.}

Write $\vh_t^{\mathrm{pre},\ell}$  for the recurrent states after position $t$ in layer $\ell$ of the pre attention recurrent layers. Similarly to the proof of Theorem \ref{thm:fixed_access_lower_bound},   $P_{m+2}$ is able to evaluate the entire block on positions $1,\ldots,N-1$, without knowing $x$. It broadcasts

\begin{equation}
\left(\vh_{rn}^{\mathrm{pre},{L_{\mathrm{pre}}}}\right)_{r\in [m+1]}.
\label{eq:mura_boundary_message}
\end{equation}
These messages require $pd(m+1)$ bits. The transmitted pre-attention states allow subsequent players who know $f_1,\dots,f_{r-1},f_{r+1},\dots, f_N,x$, but not $f_i$, to compute  pre-attention representations $\vz_{t}$, for $t =1,\dots n(r-1), nr+1, \dots, N$. 

The following communication steps are used to construct two main quantities needed for the last player $P_1$ to complete the forward pass, the full attention computation, and the retrieval set. For clarity, we present these two computations separately, but note that they happen in parallel, each player filling its role in each of the computations simultaniously.

\textbf{Full attention computation:}
Using its knowledge of $f_1,\dots, f_m$, player $P_{m+1}$ first computes the pre attention representations $\vz_t$, as well as  all key and value vectors for $t\in[mn]$. From the boundary states at $N-1$ and its knowledge of $x$, it also computes $\vz_N$ and the final queries $\vq_N^j$ for every head. For each head $j$, it broadcasts
\begin{equation}
\vs_j(N;[mn]),\qquad D_j(N;[mn])
\label{eq:mura_partial_attention}
\end{equation}

Starting from the pre-attention boundary states $\vh_{mn}^{\mathrm{pre},{L_{\mathrm{pre}}}}$, player $P_m$ evaluates the pre-attention representations $\vz_i$ for $mn+1,\ldots,N$. This suffix consists of $f_{m+1}$ and $x$. It then computes   every key and value vector from this suffix, together with the final queries  $\vq^j_N$. For each head, it computes $\vs_j(N;\{nm+1,\ldots,N\})$ and $D_j(N;\{nm+1,\ldots,N\})$, adds the received prefix statistics, and normalizes with the model's rounding rule. Equation~\ref{eq:appendix_statistics_merge} guarantees that this is the exact full-attention head output, including the final token's self contribution. Writing $\vo_N^j\in\sR^{d_h}$ for the reconstructed head-output column vector, it transmits
\begin{equation}
\vx_N^{(0)}
=\vz_N+\mW_O^\top
\begin{bmatrix}\vo_N^1\\ \vdots\\ \vo_N^h\end{bmatrix}.
\label{eq:mura_full_attention_reconstruction}
\end{equation}
Thus computing  the representation of the final token $x$ after the attention layer. The total communication cost of this process is $p(d+h)$.

\textbf{Retrieved KV cache computation:}


For a selected position $t$ in head $j$, the record $R_j(t)$ is defined by
\begin{equation}
R_j(t)=(t,\vq_t^j,\vk_t^j,\vv_t^j).
\label{eq:mura_record_size}
\end{equation}
The cost of communicating a single record is 
\begin{equation}
    b_{\mathrm{rec}}=3p(d/h+1).
\end{equation}
 Additionally, define recursively for $\quad r\in[m]$,
\begin{equation}
\begin{aligned}
F_{j,0} &= G_{j,0} = \{N\},\\
F_{j,r} &= \bigl\{ t \in \operatorname{Top}_{j,\kappa_r}(s;[N])
\mid s \in F_{j,r-1} \bigr\},\\
G_{j,r} &= \Bigl\{ t \in \operatorname{Top}_{j,\kappa_r}
\bigl(s;[1,\dots,nr] \cup [n(r+1)+1,\dots,N]\bigr)|\, s \in F_{j,r-1} \Bigr\}.
\end{aligned}
\label{eq:mura_frontier}
\end{equation}

Here, $F_{j,r}$ denotes the set of positions reached from the final token $x$ after exactly $r$ successive retrieval hops with widths $(\kappa_1,\ldots,\kappa_r)$. The auxiliary set $G_{j,r}$ contains the candidates obtained at the $r$-th hop when positions corresponding to the function table $f_r$ are excluded from retrieval.

We prove inductively that each player $P_i$, for $i\in[m+1]$, can recover the sets $F_{j,m+1-i}$ and $G_{j,m+2-i}$, together with the corresponding token records, for every attention head $j\in[h]$ (with $G_{j,m-i}$ considered only for $i\leq m$). This ensures that the final player $P_1$ can reconstruct the retrieved KV entries corresponding to all sets $F_{j,r}$, for $r\in[m]$ and $j\in[h]$. Since $P_1$ can also reconstruct the full-attention output, as shown above, and knows the query $x$, it can complete the remaining forward pass and produce the correct output, completing the proof.

The base case $i=m+1$ is simple. As shown above, player $P_{m+1}$ can compute the final pre-attention representation $\vz_N$ and hence the record $R_j(N)$ for every head $j\in[h]$. Moreover, it can compute the representations $\vz_t$ for all $t \in \operatorname{Top}_{j,\kappa_r}
\bigl(s;[1,\dots,nr] \cup [n(r+1)+1,\dots,N]\bigr)$ eligible for selection into $G_{j,1}$, allowing it to compute the top $\kappa_1$ tokens and transmit their corresponding records.

For the inductive step, suppose player $P_i$ receives from $P_{i+1}$ the sets $F_{j,m-i}$ and $G_{j,m-i+1}$, together with their corresponding token records. Write $r=m+1-i$, and let $S_r = [nr+1,\dots, n(r+1)]$ denote the positions of the function table excluded in the definition of $G_{j,r}$. To recover $F_{j,r}$, player $P_i$ computes, for each $s\in F_{j,r-1}$, the top-$\kappa_r$ candidates within $B_r$. Since $P_i$ knows the corresponding function table and has access to the necessary recurrent boundary states, it can reconstruct these tokens' representations and attention scores. Combining these candidates with those in $G_{j,r}$ yields the full set $F_{j,r}$, by the identity
\begin{equation}
\operatorname{Top}_{j,k}(s;A\cup B)
=
\operatorname{Top}_{j,k}\bigl(s;
\operatorname{Top}_{j,k}(s;A)
\cup
\operatorname{Top}_{j,k}(s;B)\bigr),
\end{equation}
which holds for disjoint sets $A,B$ under consistent tie-breaking.

Having recovered $F_{j,r}$ and its corresponding records, player $P_i$ computes $G_{j,r+1}$ by selecting the top-$\kappa_{r+1}$ candidates for each $s\in F_{j,r}$, excluding the function table unavailable to $P_i$. Since all remaining tokens are known to this player, their representations and attention scores can be computed using the transmitted boundary states. It then sends $F_{j,r}$ and $G_{j,r+1}$, together with their records, to $P_{i-1}$. This completes the inductive step.

At each retrieval hop $r\in[m]$, the transmitted records require at most $3p(d+h)\prod_{\ell=1}^{r}\kappa_\ell$ bits. Since $\kappa_\ell\geq1$ for all $\ell$, including the initial messages gives a total communication cost of at most
\begin{equation}
dp(m+1)+(d+h)p+3p(d+h)m\prod_{\ell=1}^{m}\kappa_\ell
=O(\capacity(\hybrid_{\vkappa})),
\end{equation}
where the last equality holds, using $h\leq d$. The classical $\Omega(n^{1/(m+1)})$ communication lower bound for one-way $(m+2)$-player NOF protocols computing $(m+1)$-function composition therefore implies
$\capacity(\hybrid_{\vkappa})=\Omega(n^{1/(m+1)})$, completing the proof.

\end{proof}

\section{Experimental details}
\label{app:experimental_details}

We provide the implementation and training details of our synthetic experiments below. All experiments were conducted on NVIDIA L40 GPUs using Python 3.11.16, PyTorch 2.12.1+cu126, CUDA 12.6, and NumPy 2.4.6.


\subsection{Dataset and Causal Supervision}
\label{app:dataset}

\paragraph{Sampling and serialization.}
Each example consists of five functions $f_1,\ldots,f_5:[n]\to[n]$, where $n$ is sampled from $\{25,\ldots,30\}$. Conditional on $n$, all $5n$ function values are sampled independently and uniformly from $[n]$. The functions are serialized as
\begin{equation}
\bigl(\mathrm{BOS},f_1(1),\ldots,f_1(n),\ldots,
f_5(1),\ldots,f_5(n)\bigr).
\label{eq:experimental_sequence}
\end{equation}
Unlike Definition~\ref{def:function_composition}, the sequence begins with a beginning-of-sequence (BOS) token and contains no explicit final query $x$. Function tables appear in increasing order, without separator tokens, yielding a sequence length of $N=5n+1\in[126,151]$.

The dataset contains 250,000 independently generated examples, divided into 200,000 training, 25,000 validation, and 25,000 test examples. Each split is stratified by $n$, with the number of examples for any two domain sizes differing by at most one. During batching, each example occupies a separate row, and shorter sequences are right-padded. Padding positions are excluded from attention, retrieval, loss computation, and accuracy evaluation.

\paragraph{Targets and loss.}
We evaluate the ability of hybrid blocks to compute function compositions of depths $r\in[4]$ using causal supervision at multiple positions. Specifically, each token containing $f_i(j)$, for $i\in\{2,\ldots,5\}$, is assigned $i-1$ targets, corresponding to compositions of depths $r\in[i-1]$:
\begin{equation}
z_{i,j,r}
=(f_{i-r}\circ\cdots\circ f_{i-1})(f_i(j)).
\label{eq:experimental_targets}
\end{equation}
Thus, tokens from the second function table predict $f_1(f_2(j))$, while tokens from later tables are supervised at progressively greater depths. The BOS token and the first function table receive no direct supervision.

Each depth $r\in[4]$ has a separate linear readout applied to the corresponding token representations, producing 30 output logits. For examples with $n<30$, logits corresponding to symbols outside $[n]$ are masked, and each token is evaluated only at depths for which a target is defined. Consequently, each example contributes $n(5-r)$ targets at depth $r$, for a total of $10n$ supervised predictions.

All valid predictions contribute equally to the training objective. Denoting by $\mathcal S_{\mathcal B}$ the set of supervised example--position--target triples in a logical batch $\mathcal B$, we minimize
\begin{equation}
\mathcal L_{\mathcal B}
=-\frac{1}{|\mathcal S_{\mathcal B}|}
\sum_{(e,p,t)\in\mathcal S_{\mathcal B}}
\ln p_r(t\mid e,p),
\label{eq:experimental_loss}
\end{equation}
where $p_r$ is the masked softmax distribution produced by readout $r$, and $t$ is its target at the corresponding example, position and composition depth $r$.


\subsection{Hybrid Block Architecture}
\label{app:architecture}

\paragraph{Token embedding map.}
All models use a token embedding scheme inspired by~\citet{kozachinskiy2025strassen}, consisting of nine deterministic features followed by a learned affine projection to width $d=16$. With the BOS token at position $1$, each subsequent token at position $t\in\{2,\ldots,N\}$ is characterized by its function argument $i=1+((t-2)\bmod n)$, function-table index $b=1+\lfloor(t-2)/n\rfloor$, and token value $v\in[n]$.

Define the circular encoding
\begin{equation}
\vc_R(a)=
\left(\cos\frac{2\pi(a-1)}{R},
      \sin\frac{2\pi(a-1)}{R}\right)^\top,
\qquad R_n=n+1.
\end{equation}
The nine input features are given by
\begin{equation}
\begin{aligned}
\bm\phi(t,v)=\bigl[&
R_n\vc_{R_n}(v)^\top,\ R_n\vc_{R_n}(i)^\top,\\
&5\vc_5(b+1)^\top,\ 5\vc_5(b)^\top,\ 1
\bigr]^\top.
\end{aligned}
\label{eq:experimental_features}
\end{equation}
The two function-table encodings use offsets $b+1$ and $b$, respectively. The BOS token is assigned the feature vector $(0,0,0,0,5R_n,0,0,0,1)^\top$, while padding tokens receive zero features. All feature vectors are mapped to $\sR^{16}$ by a learned affine transformation, with padding positions masked again afterward to eliminate the effect of the bias.

This encoding provides the function-table index, argument, and value of each token without revealing composition paths or targets. We use no additional learned token or function embeddings, absolute positional embeddings, or rotary positional transformations.

\paragraph{Layers.}
Each hybrid block consists of one full-attention layer followed by four efficient sequence layers, implemented as recurrent or sparse-attention layers depending on the architecture. All layers have width $d=16$, and use four heads. As described in Appendix~\ref{app:preliminaries}, each layer uses the pre-normalization and position-wise feed-forward structure of Equation~\ref{eq:appendix_transformer_wrapper}. We use affine LayerNorm with $\epsilon=10^{-5}$ and apply an additional affine LayerNorm before the output readouts. Similarly to \cite{kozachinskiy2025strassen}, each of the five feed-forward networks has two hidden layers of width $64$, with SwiGLU activations, giving dimensions $16\to64\to64\to16$. Both hidden layers use separate gate and value projections. All model parameters use standard initialization.

\paragraph{\ourmethod retrieval.}
\label{app:retrieval}
To evaluate the effect of retrieval depth, we consider four \ourmethod configurations with retrieval widths $\vkappa=(32)$, $(16,1)$, $(11,1,1)$, and $(8,1,1,1)$, corresponding to depths one through four. These configurations have  matched retrieval budgets of 32 tokens across all hops, except for $(11,1,1)$, which retrieves up to 33 tokens.

Each \ourmethod layer has four attention heads, with retrieval depths assigned to encourage learning compositions of different orders. Specifically, for a configuration $\vkappa=(\kappa_1,\ldots,\kappa_m)$, head $j\in[4]$ uses retrieval widths $(\kappa_1,\ldots,\kappa_{\min(j,m)})$ and attends only to positions reached after exactly $\min(j,m)$ hops. Thus, successive heads perform increasingly deep retrievals, up to the maximum depth $m$. This assignment is repeated across all four \ourmethod layers. Routing decisions are detached from the computational graph, while gradients continue to flow through the selected keys and values.

\paragraph{Hybrid block baselines.}
\label{app:baselines}
We compare \ourmethod against four hybrid block baselines: kernel-based linear attention~\citep{katharopoulos2020transformers}, sliding-window attention~\citep{beltagy2020longformer}, Gated DeltaNet~\citep{yang2024gated}, and a simplified HySparse variant~\citep{gao2026hysparse}, equivalent to one-hop \ourmethod with $\vkappa=(32)$. For a fair comparison, all architectures use the same width $d=16$, one full-attention layer, and four subsequent efficient sequence layers, with four attention heads where applicable.

For sliding-window attention, we evaluate two window sizes, 32 and 16. The former matches the retrieval budget of \ourmethod but incurs greater memory and computational overhead, since each of the four sliding-window layers maintains its own KV cache. In our short-sequence setting, these caches collectively store approximately as many KV entries as an additional full-attention layer. We therefore also evaluate a smaller window of size 16, providing a more computationally efficient baseline.

\subsection{Optimization and Training Schedules}
\label{app:optimization}

All models are trained for 500 epochs using AdamW with a batch size of 2,048, weight decay of 0.01, and optimizer parameters $\beta_1=0.9$, $\beta_2=0.999$, and $\epsilon=10^{-8}$. The learning rate is linearly warmed up over the first 40 epochs to a peak of $10^{-3}$, followed by cosine decay to one-tenth of the peak rate. After gradient accumulation, the global gradient $\ell_2$ norm is clipped to 1.0 before each optimization step.

\section{Extended related work}
\label{app:extended_related_work}

\paragraph{Efficient sequence models and shared attention memory.}
Reducing the cost of full attention has motivated a range of efficient sequence architectures. Early approaches introduced sparse attention mechanisms~\citep{child2019generating,beltagy2020longformer,zaheer2020big}; subsequent work developed recurrent and state-space architectures, including S4~\citep{gu2021efficiently}, Mamba~\citep{gu2023mamba}, and Gated DeltaNet~\citep{yang2024gated}. These recurrent architectures support parallel training algorithms and maintain fixed-size states during autoregressive inference. Hybrid architectures combine efficient layers with periodic full attention, as in Kimi 3~\citep{team2026kimi}, Gemma 3~\citep{gemmateam2025gemma3}, Jamba~\citep{lieber2024jamba}, and OLMo Hybrid~\citep{merrill2026olmohybrid}. Their efficient layers typically maintain memory independently of the full-attention KV cache. Recent work has explored sharing this memory. Cross-Layer Attention~\citep{brandon2024crosslayer} shares KV activations across attention layers, while DART~\citep{qian2026dart} attends to compressed recurrent chunk states. These approaches reduce memory requirements, although each attention operation still accesses a number of entries that grows with sequence length. HySparse and HySparse2~\citep{gao2026hysparse,hysparse2} instead reuse a full-attention KV cache through bounded sparse retrieval in subsequent layers. Graph Machine~\citep{hou2026graphmachine}, which is closely related to our work, also uses multi-hop sparse retrieval in hybrid architectures. It constructs and updates learned weighted edges across layers and operates on layer-specific representations. In \ourmethod, retrieval traverses a fixed attention graph; the subsequent sparse layers reuse the resulting neighborhoods and the original full-attention KV cache.

\paragraph{Expressivity as a guide to architecture design.}
Prior work has examined sequence-model expressivity from several perspectives. Circuit-complexity analyses establish limitations of log-precision Transformers~\citep{merrill2023parallel} and study how padding tokens\citep{merrill2026exact}, recurrent application of layers~\citep{dehghani2018universal} and chain of thought tokens \citep{merrill2024expressive} expand their capabilities. Other studies analyze the resources required for tasks such as sparse averaging, 3-token Matching, and function composition~\citep{sanford2023representational,sanford2024parallel,peng2024limitations,chen2024theoretical,kozachinskiy2025strassen}. Limitations identified for shallow Transformers have also motivated mechanisms such as Strassen attention~\citep{kozachinskiy2025strassen}. Complementarily, \citet{gelberg2026compressibility} study the relationship between Transformer representations and KV-cache compressibility, relating expressive capacity and memory requirements to the learned representations.

Beyond Transformers, analyses of linear RNNs identify limitations in state tracking~\citep{merrill2024illusion,sarrof2024expressive}, while \citet{grazzi2025unlocking} show that allowing negative transition eigenvalues can improve these capabilities. \citet{merrill2026olmohybrid} establish expressive advantages of combining attention and recurrence, and connect these results to the design of a large-scale hybrid model. Closely related to our analysis, \citet{ye2026hybridhierarchy} prove lower bounds for hybrid architectures on sequential function composition. We study a simpler composition task and obtain a stronger capacity lower bound for a single full-attention block, allowing arbitrary recurrent depth with its state budget accounted for as in Appendix~\ref{app:theory}, as well as top-$\kappa$ shared-memory access. Their framework also covers architectures with multiple full-attention layers. The comparison and the open question of tightness are discussed after the proof in Appendix~\ref{app:fixed_access_proof}.

Expressivity analysis also guides architecture design for other structured data. Examples include graph neural networks~\citep{xu2018powerful,morris2019weisfeiler,barshalom2024coarsening,eitan2025derivatives,southern2025hymn}, topological neural networks~\citep{bodnar2021weisfeiler,eitan2025topological}, weight- and gradient-space architectures~\citep{navon2023equivariant,dayan2026weightspace,gelberg2025gradmetanet}, and permutation-invariant models for sets and sets of structured elements~\citep{zaheer2017deep,maron2020learning}. These studies use formal expressivity results to inform architectural choices that respect the structure and symmetries of their inputs.

\end{document}